\newcommand{\OKGD} {OKGD\xspace}
\newtheorem{theorem}{Theorem}
\newtheorem{corollary}{Corollary}
\newtheorem{assumption}{Assumption}
\newtheorem{assumption_2}{Assumption A.}
\newtheorem{theorem_2}{Theorem A.}
\DeclareRobustCommand\sampleline[1]{%
  \tikz\draw[#1] (0,0) (0,\the\dimexpr\fontdimen22\textfont2\relax)
  -- (1.4em,\the\dimexpr\fontdimen22\textfont2\relax);%
}
\g@addto@macro\bfseries{\boldmath}
\newcommand{\prechange}{{\text{pre}}}
\newcommand{\postchange}{{\text{post}}}
\newcommand{\Sec}[1]		{Sec.\,\ref{#1}}
\newcommand{\Fig}[1]		{Fig.\,\ref{#1}}
\newcommand{\Eq}[1]			{Eq.\,\ref{#1}}
\newcommand{\Expr}[1]			{Expr.\,\ref{#1}}
\newcommand{\Tab}[1]		{Tab.\,\ref{#1}}
\newcommand{\Alg}[1]		{Alg.\,\ref{#1}}
\newcommand{\Theorem}[1]{Theorem\,\ref{#1}}
\newcommand{\Assumption}[1]{Assumption\,\ref{#1}}
\newcommand{\Problem}[1]{Problem\,\ref{#1}}	
\newcommand{\ie}   			{i.e.\@\xspace}
\newcommand{\eg}   			{e.g.\@\xspace}
\newcommand{\etc}   		{etc.\xspace}
\newcommand{\one}       {\mathbf{1}}     % or  {\mathbb{1}}
\newcommand{\ones}[1]   {\one_{#1}}
\newcommand{\zero}      {\mathbf{0}}
\newcommand{\zeros}[1]   {\zero_{#1}}
\newcommand{\diag}    {\operatorname{diag}}
\newcommand{\real}      {\mathbb{R}}
\newcommand{\pdf}		{p.d.f.\xspace}
\newcommand{\pdfs}		{p.d.f.'s\xspace}
\newcommand{\Hilbert}      {\mathbb{H}}
\newcommand{\inlinetitle}[2]  {\vspace{4pt}\noindent\textbf{\emph{#1}{#2}}}
\renewcommand*{\top}{{\mkern-1.5mu\mathsf{T}}}
\newcommand{\Expec}[1]{\mathbb{E}[#1]}
\newcommand{\ExpecN}[1]{\mathbb{E}_{p(y)}[#1]}
\newcommand{\Expecn}{\mathbb{E}_{p(y)}}
\newcommand{\ExpecNm}[2]{\mathbb{E}_{p_{#2}(y)}[#1]}
\newcommand{\ExpecA}[1]{\mathbb{E}_{p'(y)}[#1]}
\newcommand{\ExpecNA}[1]{\mathbb{E}_{(p(y),p'(y))}[#1]}
\newcommand{\ExpecAm}[2]{\mathbb{E}_{p'_{#2}(y)}[#1]}
\newcommand{\norm}[1]{\left\lVert#1\right\rVert}
\newcommand{\dott}[2]{\langle #1,#2 \rangle}
\newcommand{\tra}[1]{#1^{\top}}
\newcommand{\Lpc}{\mathlarger{\mathcal{L}}}
\newcommand{\id}{\mathbb{I}}
\newcommand{\dtheta}{\Delta_{\theta_v}}
\newcommand{\abs}[1]{\left|#1\right|}
\newcommand{\Npre}{n^{\prechange}}
\newcommand{\Npost}{n^{\postchange}}
\newcommand{\setpre}{Y_t^{\prechange}}
\newcommand{\setpost}{Y_t^{\postchange}}
\newcommand{\setprebef}{Y_{t-1}^{\prechange}}
\newcommand{\bp}{q} % burning period
\newcommand{\nghood}{\text{ng}} % burning period
\newcommand{\Gdims}{N} % number of graph nodes
\newcommand{\explanation}[1]{\hfill\footnotesize(\text{#1})}
\newcommand{\pushright}[1]{\ifmeasuring@#1\else\hfill$\displaystyle#1$\fi\ignorespaces}
\newcommand{\pushleft}[1]{\ifmeasuring@#1\else$\displaystyle#1$\hfill\fi\ignorespaces}
\DeclareMathOperator*{\R}{\mathbb{R}}
\newcommand{\NOTE}[1] 				{{\color{red} #1}}
\newcounter{marginNoteCounter}
\title{Online non-parametric change-point detection\\ for 
%multiple 
heterogeneous data streams observed over graph nodes}
\author{
Alejandro de la Concha \qquad Argyris Kalogeratos \qquad Nicolas Vayatis 
}
\begin{document}

\maketitle

\begin{abstract}

Consider a heterogeneous data stream being generated by the nodes of a graph. The data stream is in essence composed by multiple streams, possibly of different nature that depends on each node. %
At a given moment $\tau$, a change-point occurs for a subset of nodes $C$, signifying the change in the probability distribution of their associated streams. In this paper we propose an online non-parametric method to infer $\tau$ based on the direct estimation of the likelihood-ratio between the post-change and the pre-change distribution associated with the data stream of each node. We propose a kernel-based method, under the hypothesis that connected nodes of the graph are expected to have similar likelihood-ratio estimates when there is no change-point. We demonstrate the quality of our method on synthetic experiments and real-world applications. 
\end{abstract}

\section{Introduction}
\label{sec:intro}
\begin{comment}

Change-point detection is an old problem in statistics, dating back from the 50’s \cite{Page1954,Page1955} with systems monitoring and industrial quality control as its main motivations. This problem has been actively investigated since then leading to a vast literature and applications in many domains including computer science, finance, medicine, geology, meteorology, etc \cite{truong2020,Tartakovsky2014,Aminikhanghahi2016,Tartakovsky2021}.
\end{comment}

Change-points delimit segments in which a data stream exhibits a notion of stationarity. %Their detection is fundamental for time-series analysis and control. The multiple number of real world problems that can formulated as change-point detection task has lead to a vast literature \cite{truong2020,Tartakovsky2014,Aminikhanghahi2016,Tartakovsky2021}.
Their detection is fundamental for time-series analysis and control tasks. %, hence the vast related literature. %\cite{truong2020,Tartakovsky2014,Aminikhanghahi2016,Tartakovsky2021}. %Nevertheless, recent technological advances have enhanced the access to huge and richer amounts of data that classical change-point detection methods were not designed to address. A clear example is when data streams lie over a graph.
Modern challenges include handling larger amounts of more complex data streams, a clear example of which is when those lie over a graph. %
Many real-world systems can be seen as a network in which each node generates a stream of data. %
For instance, activity or monitor data can be collected from users in a social network, stations in a railway, or banks in a financial system. %
%
%For example, users in an online social network generate content% and interaction patterns% and consume content
%; the status of the stations of a railway can be described by monitor variables (\eg affluence, delays), while financial indexes do the same for banks in a financial system. %
%
%monitor variables (\eg affluence or delays) may describe entities such the stations of a railway or the banks in a financial system. %
%describe the railway stations; respectively, %financial indexes describe the status of banks in a financial system.
%banks in a financial system provide regulators with information about their performance, \etc% or credit quality.
 A change-point in these systems may signify the shift of users' interest%social media account being hacked
, a disruption of the railway service, or the early signs of an economic crisis. Using any available information to detect as fast as possible such changes is important. 
%
%A natural question arises though: how can we integrate the latent information carried by the graph structure in a change-point detection task? The goal of this paper is to present the Online Kernel Graph Detector (\OKGD) algorithm that builds upon the notion \emph{graph smoothness}, which formalizes the intuition that two nodes are expected to have a similar behavior if they are connected.
The aim of this paper is to address the naturally arising question: how can we integrate the latent information carried by the graph structure in a change-point detection task? %The goal of this paper is to present the Online Kernel Graph Detector (\OKGD) algorithm that builds upon the notion \emph{graph smoothness}, which formalizes the intuition that two nodes are expected to have a similar behavior if they are connected.

\begin{comment} 
In all these situations, two nodes are expected to have a similar behavior if they are connected. This intuition has been formalized in the concept of graph smoothness or bandlimitedness in the Graph Fourier Representation \cite{Ortega2018,HumbertJMLR2019}. Such hypothesis have proved to be useful to identify anomalies in a community of nodes \cite{Sharpnack2016,Sharpnack2013} at a given moment of time or considering an evolving environment \cite{Ferrari2020}. 
\end{comment}

\inlinetitle{Related work}{.} Change-point detection methods can be classified into \emph{offline} \cite{truong2020,Aminikhanghahi2016} and \emph{online} approaches \cite{Tartakovsky2014,Tartakovsky2021} depending on whether we have access to a complete dataset or we observe data in near real-time. Further classification can be done into parametric, semi- and non-parametric approaches, depending on the assumptions which are made regarding the distribution generating the data stream. The Online Kernel Graph Detector (\OKGD) presented in this paper is a online non-parametric approach. 

Kernel methods have been used %in change-point detection 
in both the offline \cite{Harchaoui2008,Garreau2018,Arlot2019}
and the online settings \cite{Kawahara2012,Bouchikhi2019,Li2019}, showing useful theoretical and practical properties. Kernel-based methods are appealing because they can be applied to complex data %structures 
and monitor different kinds of changes, such as mean shifts, or changes in the correlation between variables. %, \etc %Among the kernel-based methods, 
More related to our work is the kernel-based algorithm presented in \cite{Kawahara2012} that estimates directly the likelihood-ratio between the post-change-point and the pre-change-point probability density function of a data stream. A change is spotted as soon the estimator is bigger than a given threshold. Its theoretical properties %of such approach 
under a semi-parametric hypothesis are analyzed in \cite{Kanamori2012}. %Our method, \OKGD, follows a similar approach and aims to estimate the likelihood-ratio of the data stream at each of the nodes while exploiting the \NOTE{similarity} between nodes.

%Among the major limitations of the existing literature, we may note that it has hardly studied the setting where multiple data streams are observed over the nodes of a graph. Overall, they compose a \emph{heterogeneous data stream} composed of several signals of different dimensions and/or nature (\eg some nodes may generating sensor signals, others text or other content, \etc). %
%The existing literature has hardly studied the setting where different data streams are observed over the nodes of a graph. In general, this is overall a \emph{heterogeneous data stream}, which is the aggregation of several substreams of different dimensions and/or nature (\eg some nodes may generate sensor signals, others text or other content, \etc). %
The existing literature has hardly studied change-point detection for \emph{heterogeneous data streams}, where each node may generate a stream of different dimensions and/or nature (\eg some nodes may generate sensor signals, others text or other content, \etc). %
To the best of our knowledge, \cite{Ferrari2020} is the only paper referring to this setting. Its authors make the hypothesis that the underlying graph has a community structure and a change may occur in only one of these communities at a given moment. Using tools originating from Graph Signal Processing \cite{Ortega2018,Perraudin2017}, they spot %both
the change-point and the affected community. %
%
%The approach described in \cite{Ferrari2020} 
Their approach consists in estimating independently the likelihood-ratio for each of the nodes. This essentially composes a graph signal, which is then filtered with the Graph Fourier Scan Statistic (GFSS) \cite{Sharpnack2016} in order to detect the community of interest. The main disadvantage of this approach is its limited applicability to cases where the graph has poor community structure (\eg graphs approximating a manifold structure). Furthermore, the sophistication of GFSS is limited, as it is a graph filter whose only effect is to filter out the high frequencies while maintaining the low frequencies. %This characteristic makes the change-point detector blind to some scenarios as the one when change-point occurs at random locations over the graph, a problem that can be relevant in practice.
Therefore, scenarios where changes may concern nodes that are less concentrated at a site of the graph (or even arbitrary to it) is a blind spot to this approach. %Finally, this method has not been thoroughly tested on heterogeneous data, which in theory can handle.

%In contrast, our method uses a different cost function that integrates the graph structure in the estimation of the likelihood-ratios. It relays in the hypothesis of smoothness of the associated graph signal. An hypothesis that is always true when no change-point occurs since the likelihood-ratios will be one in all nodes. This approach allows us to spot changes in more general graph topologies and spot a larger diversity of change-points.

\inlinetitle{Main contributions}{.} %
In this paper we present the Online Kernel Graph Detector (\OKGD) algorithm that builds upon the notion \emph{graph smoothness}, which formalizes the intuition that two nodes are expected to have a similar behavior if they are connected. 
%
%Our method, \OKGD, follows a similar approach and aims to estimate the likelihood-ratio of the data stream at each of the nodes while exploiting the \NOTE{similarity} between nodes.
%
%In contrast, our method uses a different cost function that integrates the graph structure in the estimation of the likelihood-ratios. It relays in the hypothesis of smoothness of the associated graph signal. An hypothesis that is always true when no change-point occurs since the likelihood-ratios will be one in all nodes. This approach allows us to spot changes in more general graph topologies and spot a larger diversity of change-points.
%
%In this paper 
We introduce a kernel-based cost function made of two terms:
a penalized LSE-like term aiming to infer the likelihood-ratio at each of the nodes, and a Laplacian penalization term aiming to guarantee the smoothness of the estimations. Then, we develop a stochastic gradient descent strategy to minimize the cost function. %
%Our strategy allows the treatment of the change-point detection problem in a online and distributed manner. %
%Our strategy performs change-point detection in an online and distributed manner, in addition allowing the processing of heterogeneous data. %

In a nutshell, %the proposed %Online Kernel Graph Detector (\OKGD) 
the \OKGD approach: i) is online,  ii) non-parametric, meaning it is flexible and can handle heterogeneous data streams, ii) exploits the expected smoothness of the graph signal derived by the likelihood-ratio estimations, and iv) is easy to parallelize and its computational time improves %depends on the 
with the sparsity of the graph. 

\begin{comment}
The organization of the paper is as follows: we start by introducing basic concepts related with the definition of the problem and tools that we will use later. Then, we describe in detail the main components of our method. We continue by showing its main theoretical properties and we finish by testing its performance in synthetic and real data sets. 
\end{comment}

\section{Notation and definitions} \label{sec:notation}

Let $x_i$ be the $i$-th entry of vector $x$, and $A_{ij}$ be the entry associated with the $i$-th row and $j$-th column of a matrix. $\ones{\Gdims}$ represents the vector with $\Gdims$ ones (resp. $\zeros{\Gdims}$), and $\id_{\Gdims}$ is the $\Gdims\times \Gdims$ identity matrix (the subscript may be omitted). Let $G=(V,E,W)$ be a weighted and undirected graph without self-loops, where $V$ is the set of vertices, $E$ the set of edges, and $W \in \real^{\Gdims\times \Gdims}$ %denotes 
its %the associated 
adjacency matrix. 

Let us suppose we observe $\Gdims$ synchronous data streams that could be of different nature, each of them being generated by a node of $G$. We denote by $y_{v,t}$ the observation generated by node $v$ at time $t$, by $y_t=(y_{1,t},...,y_{\Gdims,t})$ the vector obtained after concatenating all the observations from all the nodes at time $t$.  We will denote by $Y=\{y_{t}\}_{t\in \{1,...\}}$ the whole time-series.

Additionally, we suppose that the graph $G$ (hence $W$) remains constant through time. We also suppose that $W$ is weighted and symmetric, $W_{uv}=W_{vu}$, and each $W_{uv}$ entry is a positive value reflecting how similar are expected to be the data streams generated by the nodes $u$ and $v$ (note: $W_{uu}=0, \forall u \in V$). %
%
%The expression \NOTE{$u \sim v$ means that $u$ belongs to the set of neighbors} of $v$ ($u \in \nghood(v)$), and the degree of $v$ is computed as $d_v=\sum_{u \in \nghood(v)} W_{uv}$.
The degree of $v$ is $d_v=\sum_{u \in \nghood(v)} W_{uv}$, where $u \in \nghood(v)$ means that $u$ is a neighbor of $v$.

A term that we use throughout the paper is that of \emph{graph signal} which is attributed to any function $x:V \rightarrow \R$ defined from the nodes of a graph into $\R$. Consider as change-point the timestamp at which the distribution associated with a time-series changes meaning its probability density function (\pdf) goes from $p(y)$ to $p'(y)$. %For example, if $p(y)$ is a univariate Gaussian distribution, a change-point will indicate a change in the mean or variance of $p(y)$ leading to $p'(y)$. %#A: I removed that because it's trivial and also relates to parametric methods
%
%In this paper we will denote 
Here, a change-point is denoted by $\tau$, which indicates a change in the marginal distribution of a subset of nodes $C$, more explicitly:
\begin{equation}
    %t<{\tau} \ \  y_{v,t} \sim p_v(y)  \text{\quad  and\quad  } t \geq {\tau} \ \  y_{v,t} \sim p_v'(y),
    \left\{
    \begin{array}{ll}
        t < {\tau} \ \  y_{v,t} \sim p_v(y);  \\
        t \geq {\tau} \ \  y_{v,t} \sim p_v'(y),
    \end{array}
    \right.
\end{equation}
where $p_v(y) \neq p'_v(y)$ if ${v \in C}$, otherwise $p_v(y) = p'_v(y)$. We consider $p_v(y),p'_v(y)$ as well as $C$ and $\tau$ o be unknown. We denote by $p(y)$ (resp. $p'(y)$) the joint \pdf of $Y$ before the change-point (resp. after the change-point), meaning its $v$-marginal \pdf is $p_v(y)$ (resp. $p'_v(y)$). A simple example of this framework is illustrated in \Fig{fig:CP_GS}.

Similarly, we denote by $\ExpecN{\cdot}$ (resp. $\ExpecA{\cdot}$) the expected value under the pre-change (resp. post-change) distribution. For a set of observations $\{y_t\}_{t \in I}$ in a given time interval $I$, 
we denote by $\ExpecNA{\cdot}$ the expectation on the joint \pdf of the time-series at that interval. If there is a change%in the interval $I$
, then $p(y) \neq p'(y)$, otherwise $p(y) = p'(y)$.

In this work, we estimate and monitor the likelihood-ratio between $p'_v(y)$ and $p_v(y)$ at each node, that is the quantity $r_v(t)= \frac{p'_v(y)}{p_v(y)}$. We define by $r(y)=(r_1(y),...,r_{\Gdims}(y))$ the vector whose entries are the likelihood-ratios corresponding to each node of $G$. Note that $r(y)$ is a graph signal. 
In the text, we call \emph{null hypothesis} %, $\Hnull$, 
the case where there is no change, \ie when $p_v(y)=p'_v(y)$, $\forall v \in V$, which opposes the \emph{alternative hypothesis} where a change exists.%is denoted by $\Halt$.

%%Having defined the 
%Given a Laplacian operator $\Lpc$ associated with the graph $G$, we can define the \emph{smoothness} of a graph signal $x$ over $G$: 
%%
%\begin{equation}
    %s(x)=x^\top \Lpc x.
%\end{equation}
%%
%In this work, we focus on the combinatorial Laplacian operator $\Lpc=\diag(d_v)_{v \in V}-W$, for which smoothness becomes: 
%%
%\begin{equation}
%%s(x) = \frac{1}{2}  \sum_{u \in V} \sum_{v \in V} W_{uv} (x_u-x_v)^2.
%s(x) = \frac{1}{2}  \sum_{u,v \in V} W_{uv} (x_u-x_v)^2.
%\end{equation}
%%
%Notice that %the value of 
%$s(x)$ is always positive, and is lower when adjacent values are more similar.
 
The \emph{smoothness} of a graph signal $x$ over a graph $G$ can be defined using a Laplacian operator $\Lpc$ associated with $G$. %
%
%\begin{equation}
    %s(x)=x^\top \Lpc x.
%\end{equation}
%
Here, we use the combinatorial Laplacian operator, $\Lpc=\diag(d_v)_{v \in V}-W$, for which smoothness is defined as: 
\begin{equation}{\label{eq:smooth2}}
s(x) = x^\top \Lpc x %
%\mathrel{\stackrel{\makebox[0pt]{\mbox{\normalfont\tiny comb. \Lpc}}}{\ \ \ \ \ =\ \ \ \ \ }} %
= \frac{1}{2}  \sum_{u,v \in V} W_{uv} (x_u-x_v)^2.
\end{equation}%
%
%Notice that %the value of 
$s(x)$ is always positive, and is lower when adjacent values are more similar.

\begin{figure}[t!]
%\begin{minipage}[b]{\linewidth}
  \centering
  %\centerline{
  %\includegraphics[width=0.9\linewidth, viewport=1 30 750 490, clip]{graph_signal_fixed.png}\hspace{4mm}%
  \includegraphics[width=0.7\linewidth, viewport=0 250 900 720, clip]{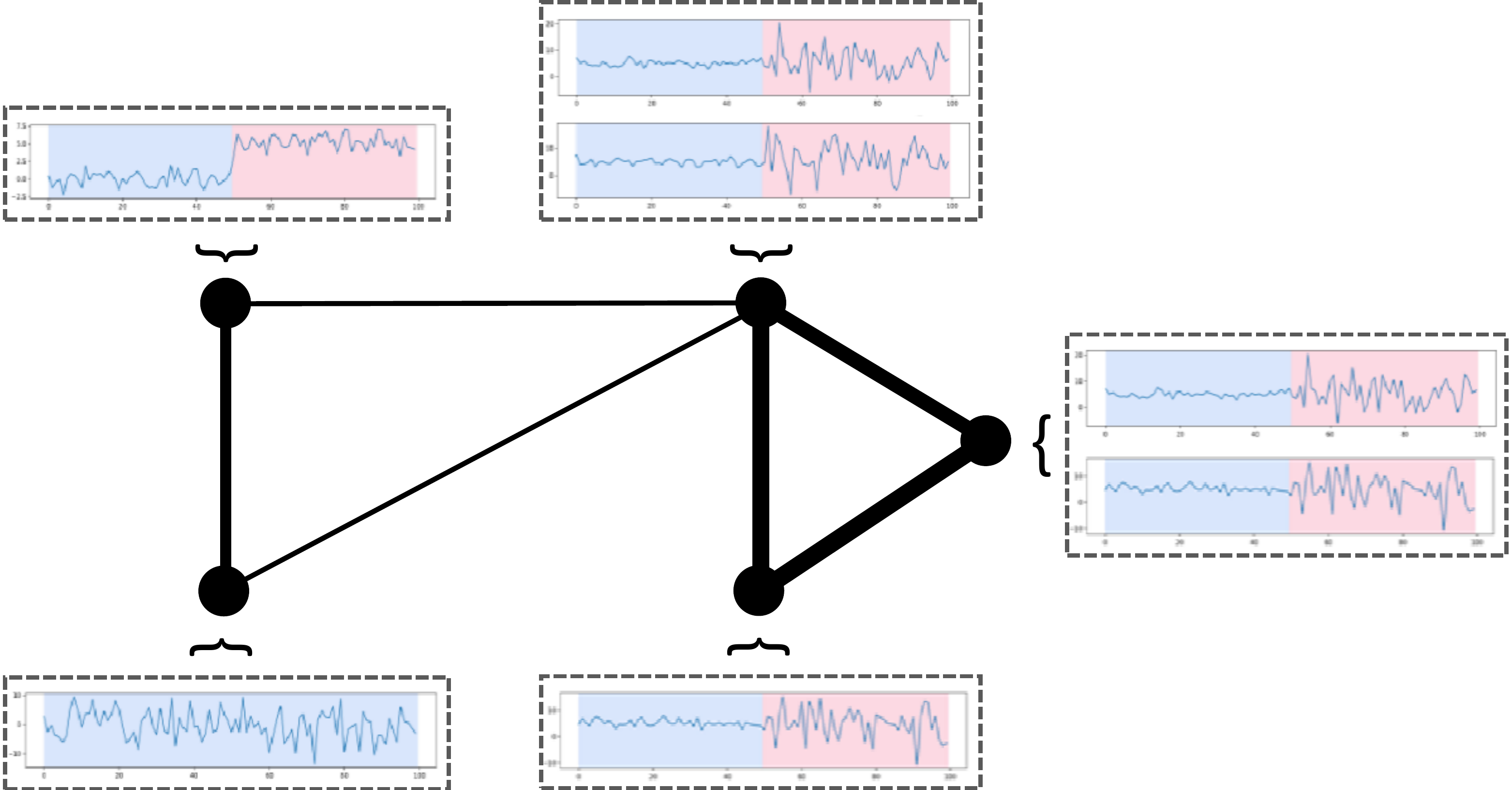}\hspace{4mm}%
	%}
%\end{minipage}
\vspace{-0.5mm}
    \caption{\small Example of a heterogeneous data stream, observed over the nodes of a weighted graph that expresses node behavior similarity. The dimensions or even the nature of the time-series of each node may vary. A change occurs in a subset of nodes and the change-point is at the moment when the color changes %(from blue to pink) 
in the respective time-series. %As it can be seen, 
For the top-left node, the change-point is associated with a shift in the mean, while the three nodes of the clique on the right present a shift in the variance of their streams.}
		%one of the change-points is associated with a shift in the mean and the other one with a shift in the variance of a mutivariate stream. The color of the vertices indicate the mean of the related time-series}
\label{fig:CP_GS}
\end{figure}

\section{Problem formulation and solution}\label{sec:Problem_formulation}

Our approach relies on a cost function that would estimate the likelihood-ratio $r_v$ associated with each node, and would integrate all such estimates. We desire the solution of the cost-function to have a similar behavior to that of $r_v$ under the null hypothesis, so that we can detect when new observations come from a different probabilistic model. Furthermore, we want to be able to solve the optimization problem in a way that allows the online update of the parameters.

Estimating the likelihood-ratio $r_v$ at each node $v$ requires a dictionary and a kernel function associated with $v$. In our approach, the estimation of the $r_v$'s of all nodes is performed jointly, however we suppose that a dictionary and a kernel function can be defined and learned independently for each node. The quality of the dictionary affects the performance of the algorithm. This is why we propose to integrate the approach of \cite{Cedric2009} in order to enlarge our dictionary with arriving observations that are dissimilar to what has been previously observed.

\subsection{Cost function}\label{subsec:estimation}
%
%In practice, we most usually do not know the %form of the 
%marginal distribution of the data stream generated by node $v$, before the change-point $p_v(y)$ and after it $p'_v(y)$. To avoid modeling the joint \pdfs, $p(y)$ and $p'(y)$, whose marginals are $p_v(y)$ and $p'_v(y)$, we estimate the graph signal $g(y):r(y)-\NOTE{\ones{\Gdims}}=(r_1(y_1)-1,...,r(y_{\Gdims})-1)^\top$ via a kernel-based approach. %
%In practice, 
Most usually we do not know the form of the joint \pdfs, before the change-point $p(y)$ and after it $p'(y)$; same for its marginals $p_v(y)$ and $p'_v(y)$ for each node $v$. Instead, our method uses a kernel-based approach to estimate the graph signal $g(y):r(y)-\ones{\Gdims}=(r_1(y_1)-1,...,r(y_{\Gdims})-1)^\top$. %
Under the null hypothesis of no change-point, it holds $g(y) = \zeros{\Gdims}$, hence $s(g(y)) = 0$, see \Eq{eq:smooth2}). For this reason, in our cost function we include a penalization term against the non-smoothness of $g(y)$: % promoting smoothness
%
%Notice that, under the null hypothesis of no change-point, \NOTE{$g(y) \rightarrow \zeros{\Gdims}$, hence having maximal smoothness (\ie $s(g(y)) \rightarrow 0$}, see \Eq{eq:smooth2}). As we expect $g(y)$ to be smooth under the null hypothesis we will include a related penalization term in the cost function of \Eq{eq:cost_function}: 
%
\begin{equation}\label{eq:cost_function}
%\hspace{-4mm}
\begin{aligned}
\Phi(g) & =   \Expecn \left[ \frac{\norm{r(y)-\one_{\Gdims}-g(y)}_2^2}{2}+\frac{\lambda}{2} g(y)^\top \Lpc  g(y) \right] \\
     & =  \frac{1}{2} \ExpecN{\norm{g(y)}_2^2}+\ExpecN{\tra{\one_{\Gdims}}g(y)}-\ExpecA{\tra{\one_{\Gdims}}g(y)} \\
    & + \frac{\lambda}{4} \Expecn \left[ \sum_{u,v \in V} %\sum_{v \in V} 
		W_{uv} \left(   g_u(y) -  g_v(y) \right)^2  \right]  + K,%\!\!\!\!\!\!\!\!\!\!\!\!
\end{aligned}
\end{equation}
where $K$ is a constant. Notice that the initial expression of \Eq{eq:cost_function} works under the distribution $p(y)$ before the change-point (null hypothesis) and consists of a least square term that controls the approximation error, and a penalization term aiming to promote the smoothness of $g(y)$. Then, the second equality is due to %:
%
%\begin{equation}\label{eq:r_v}
    $\ExpecN{r_v(y)g_v(y)}= \ExpecAm{g_v(y)}{v}$,
%\end{equation}
which is is easy to verify given the definition of $r(y)$.

The next step is to approximate $\Phi(g)$ %the cost function 
by means of empirical expectations. %In practice, 
It is common that a user has a sample of $\Npre$ observations generated by the probabilistic model $p(y)$, then $\Npost$ new observations arrive and she has to decide if those last observations are still coming from $p(y)$. We denote by $Y_t$ the set of all observations coming from $p(y)$ and have occurred up to the decision time $t$, and $\setpost$ the last observations that need to be validated. %\NOTE{We suppose the user takes a sub-sample of size $\Npre$ from $Y_t^{\prechange}$ denoted by $\setpre$}.
 Let us suppose that the user takes a sub-sample $Y_t^{\prechange} \subseteq Y_t$ of size $\Npre$, and observes $\Npost$ recent observations up to time $t$: %More precisely: 
%
\begin{comment}
%
\begin{equation}\label{eq:windows}
\left\{
    \begin{array}{ll}
        Y_t^{\prechange}\!\! &= \{y_{t-(\Npre+\Npost-1)},...,y_{t-(\Npost-1)}\},  \\
        Y_t^{\postchange\!\!\!\!} &= \{y_{t-\Npost},...,y_{t-1},y_{t}\},
    \end{array}
    \right.
\end{equation}
%
\end{comment}
%
%
\begin{equation}\label{eq:windows}
\left\{
    \begin{array}{ll}
        Y_t^{\prechange}\!\! &= \{y_{1},...,y_{\Npre}\}, \ \forall y_1,...,y_{\Npre} \sim p(y);  \\
        Y_t^{\postchange\!\!\!\!} &= \{y_{t-\Npost},...,y_{t-1},y_{t}\}.
    \end{array}
    \right.
\end{equation}
%
%With this elements, we can define the optimization \Problem{eq:empirical_problem} in terms of its empirical expected values: 
We can now define our optimization problem %\Problem{eq:empirical_problem} 
in terms of its empirical expected values: 
\begin{equation}\label{eq:empirical_problem}
\hspace{-2.5mm}
\begin{aligned}
& \min_{(g_1,...,g_{\Gdims}) \in H_1 \times H_2,...,\times H_{\Gdims}}  \sum_{v \in V}  \left[ \frac{\sum_{y_j \in\setpre} g^2_v(y_{v,j})}{2 \Npre} \right.  \\
& \left. + \left( \frac{\sum_{y_j \in\setpre} g_v(y_{v,j})}{\Npre} - \frac{\sum_{y_j \in\setpost} g_v(y_{v,j})}{\Npost} \right) \right] \\ & + \frac{\lambda}{2}\left[ \frac{  \sum_{y_j \in\setpre} \sum_{u,v \in V} W_{uv}  \left(  g_v(y_{v,j}) -  g_v(y_{u,j}) \ \right)^2  }{\Npre} \right] 
\\ & + \sum_{v \in V} \frac{\gamma}{2} \norm{g_v}^2_{H_v},\!\!\!\!\!\!\!\!\!\!
\end{aligned}
\end{equation}
where $H$ denotes the product of $N$ reproducing kernel spaces $H:H_1 \times H_2 \times ... \times H_{\Gdims}$. Each $H_v$ represents the reproducing kernel space related with the data stream of node $v$ with kernel function $k_v(x_v,y_v): H_v \times H_v \rightarrow \R$. The term $\norm{g}^2_{H_v}$ is an extra penalization term which favors sparse representations of $g_v(y_v)$ in terms of a basis of $H_v$. By applying the Representer Theorem in this context we can show that every solution of \Problem{eq:empirical_problem} satisfies:
\begin{equation}\label{eq:kernelsum}
  g_v(\cdot,\theta_v) = \sum_{j={t-(\Npost+\Npre-1)}}^{t} \theta_{v,k} k_v(\cdot,y_{v,j}).
\end{equation}
In this way, the problem will be solved once we find the parameters $\theta_{v,j}, \forall v \in V \text{ and } \forall j \in \{t-(\Npost+\Npre-1), ..., t\}$. Having a dictionary that is rich enough to represent each of the spaces $H_v$, $\forall v$, we can reduce the complexity of \Eq{eq:kernelsum} to: 
\begin{equation}\label{eq:kerneldict}
 g_v(\cdot,\theta_v) = \sum_{l={1}}^{L_v} \theta_{v,l} k_v(\cdot,y_{v,l}),
\end{equation}
where $L_v$ is the size of the dictionary describing the data stream generated by the node $v$, $\theta_v=(\theta_{v,1},...,\theta_{v,L_v})^\top$ and $\theta=(\theta_1^\top,...,\theta_{\Gdims}^\top)^\top \in \R^{L}$, where $L=\sum_{v \in V} L_v$. We let $k_v(y_{v,j})=(k_v(y_{v,j},y_{v,0}),...,k_v(y_{v,j},y_{v,L_v}))^\top$ and we define the matrix $K_{G}(y_j) \in \R^{L \times \Gdims}$ as:
\begin{equation}
K_{G}(y_j) = %
\begin{pmatrix}
k_1(y_{1,k}) & 0 & ... & 0 \\\
0 &  k_2(y_{2,k}) & ... & ...  \\\
... &  ... & ... & ... \\\
0 &  0 & ... & k_2(y_{\Gdims,k})
\end{pmatrix}\!\!.\!\!
\end{equation}

\Eq{eq:kerneldict} allows us to rewrite the optimization \Problem{eq:empirical_problem} in terms of the parameter $\theta$. Let us introduce the partial terms: 
\begin{equation}
\begin{aligned}\label{eq:updatehs}
 h_{v,t}^{\postchange} & = \frac{\sum_{y_j \in\setpost} k_v(y_{v,j})}{\Npost}, 
h_{v,t}^{\prechange}  = \frac{\sum_{y_j \in\setpre} k_v(y_{v,j})}{\Npre} \in \real^{L_v}, \\
H_{v,t}^{\prechange}  & =\frac{\sum_{y_j \in\setpre} k_v(y_{v,j})k_v(y_{v,j})^\top}{\Npre} \in  \real^{L_v \times L_v}, \\
h^{\postchange}_t  & = (h_{1,t}^{\postchange}, h_{2,t}^{\postchange},... h_{\Gdims,t}^{\postchange}),
h^{\prechange}_t  = (
  h_{1,t}^{\prechange},  h_{2,t}^{\prechange},..., h_{\Gdims,t}^{\prechange}) \in \real^{L}, \\
H^{\prechange}_t  & = \begin{pmatrix}
H^{\prechange}_{1,t} & 0 & ... & 0 \\\
0 &  H^{\prechange}_{2,t} & ... & ...  \\\
... &  ... & ... & ... \\\
0 &  0 & ... & H^{\prechange}_{\Gdims,t}  
 \end{pmatrix} \in \real^{L \times L} .
\end{aligned}
\end{equation}
%
\begin{comment}
Now, %Using the previous expressions we can reformulate 
our estimation \Problem{eq:empirical_problem} can be reformulated to:
%
\begin{equation}\label{eq:final_problem}
\begin{aligned}
  &  \min_{\theta \in \R^L } F_t(\theta)  = \min_{\theta \in \R^{L} } \frac{\theta^\top H_t^{\prechange} \theta}{2} +\theta^\top h_t^{\prechange} - \theta^\top h_t^{\postchange} \\ & + \frac{\lambda}{2} \theta^\top \Big( \sum_{y_j \in\setpre} \frac{ K_G(y_j) \Lpc K_G(y_j)^\top}{\Npre} \Big) \theta + \gamma \frac{\theta^\top \id_L \theta}{2} \\
  &  = \min_{\theta \in \Theta } \sum_{v \in V} \left[ \frac{ \theta_v^\top H_{v,t}^{\prechange}\theta_v }{2}  + \theta_v^\top h_{v,t}^{\prechange} -  \theta_v^\top h_{v,t}^{\postchange}  + \frac{\gamma}{2}  \theta_v^\top \id_{L_v} \theta_v
\right] \\ & +  \frac{\lambda}{4} \sum_{y_j \in\setpre }\sum_{u,v \in V} \frac{   W_{uv}  \left(  \theta_v^\top k_v(y_{v,j}) - \theta_u^\top k_u(y_{u,j})  \right)  ^2 }{\Npre} .
\end{aligned}
\end{equation}
 %
As we can see, $F_t(\theta)$ has a quadratic form:
\end{comment}
 %
Finally, we rewrite \Problem{eq:empirical_problem} to the following minimization of a function $F_t(\theta)$ that has an elegant quadratic form (see details in the Appendix):
\begin{equation}\label{eq:final_problem}
    \min_{\theta \in \R^L } F_t(\theta)  = \min_{\theta \in \R^{L} } \frac{ \theta^\top  A_t \theta}{2} + \theta^\top b_t,
\end{equation}
%
%where 
%
\begin{align}
    \!\!\!\!\!\!\!\!\text{where}\ \  A_t &= H_t^{\prechange}+  \lambda  \sum_{y_j \in\setpre } \frac{ K_G(y_j) \Lpc K_G(y_j)^\top}{\Npre} + \gamma \id_{L},\!\! \\
    b_t &= h_{t}^{\prechange} - h_{t}^{\postchange}.
\end{align}
Since the matrix $A_t$ is positive definite, \Problem{eq:final_problem} has a unique solution for each $t$.

\subsection{Online estimation}

At each time $t$, we generate $\setpre$ from the set $Y^{\prechange}$ and use the last $\Npost$ observations to compute $F_t$. With these elements we can solve the optimization \Problem{eq:final_problem}. Nevertheless, in many applications the size of the graph is large, making the inversion of the matrix $A_t$ prohibitive. Furthermore, we know that under the null hypothesis, all $F_t$, $\forall t$, have the same expectation. (Later, we will prove that the expected value is $0$). We can then define the following cost function:
\begin{equation}\label{eq:expected_cost}
\begin{aligned}
 \min_{\theta} \textbf{F}(\theta)  &=  \min_{\theta} \ExpecNA{F_t(\theta)}\\
 &= \min_{\theta \in \R^{L} } \frac{ \theta^\top \ExpecNA{ A_t} \theta}{2}  +  \theta^\top \ExpecNA{b_t},
\end{aligned}
\end{equation}
\Eq{eq:expected_cost} suggests that $F_t$ can be seen as a noisy observation of the cost function $\textbf{F}(\theta)$. We would like to minimize $\textbf{F}(\theta)$ using the information given by each cost function $F_t$ when $p(y)=p'(y)$. The plan is to follow a stochastic gradient descent-like strategy that we know it will converge to $0$, and spot a change-point when the update exceeds a given threshold. We apply the Block Stochastic Gradient Method (BSGD) \cite{Xu2015} that combines the advantage of stochastic gradient descent with block coordinate descent to solve problems with multiple blocks of variables. Also, BSGD achieves an optimal order of convergence rate.

In our problem, each block of variables is associated with each node $v$, thus it contains $\theta_v$. Before applying BSGD, we require to define a fixed update order for the variables. Let $\theta_{<v}$ be the set of variables that were updated before $v$, and $\theta_{ \geq v}$ be the complement of that set. BSGD requires a noisy observation of the partial derivative of the cost function $\textbf{F}$. In our case, such noisy gradient will be the derivative of the cost function $F_t$ with respect to $\theta_v$. Then, the estimate at time $t+1$ ($\theta_{v,t+1}$) can be updated according to: 
\begin{equation}\label{eq:update}
\begin{aligned}
\theta_{v,t+1} & =\theta_{v,t}-\alpha_{v,t} \big( \Delta_{\theta_{v}} F_t((\theta_{t+1})_{<v},(\theta_{t})_{ \geq v}) \big) \\
 & = \theta_{v,t}-\alpha_{v,t} \big( B_{v,t} \theta_{v,t} + c_{v,t} \big)\\
 & = \theta_{v,t}-\alpha_{v,t} \Bigg[ \Big( (1+\lambda d_v) H_{v,t}^{\prechange}+ \id_{L_v} \Big) \theta_{v,t} \\
 & -  \frac{\lambda}{\Npre} \sum_{y_j \in\setpre}\sum_{u \in \nghood(v)} W_{uv}   \left[ k_v(y_{v,j})k_u(y_{v,j})^\top \right.
\\ &\left. \big( \theta_{u,t+1} \one_{u<v} + \theta_{u,t} \one_{u \geq v} \big)\right]+  h_{v,t}^{\prechange}-h_{v,t}^{\postchange} \Bigg],
\end{aligned}
\end{equation}%
where $\Delta_{\theta_v,t} F_t(\cdot)$ denotes the partial derivative of the cost function $F_t$, and $\alpha_{v,t}$ is the learning rate associated with node $v$ at time $t$. It is worth pointing out that the update of $\theta_{v}$ only require information from the neighbors of $v$. 

In order to reduce this expression, we introduce the terms:
\begin{equation}\label{eq:udate}
\begin{aligned}
    & B_{v,t}  =(1+\lambda d_v) H_{v,t}^{\prechange}+ \id_{L_v} \\
    & c_{v,t} =  h_{v,t}^{\prechange}-h_{v,t}^{\postchange}  -  \frac{\lambda}{\Npre} \sum_{y_j \in\setpre}\sum_{u \in \nghood(v)} W_{vu}   \left[ k_v(y_{v,j})k_u(y_{u,j})^\top \right.
\\ &\left. \big( \theta_{u,t+1} \one_{u<v} + \theta_{u,t} \one_{u \geq v} \big)\right].
\end{aligned}
\end{equation}
The properties of this estimator are analyzed in the Theoretical Analysis section %\Sec{sec:Theorical_Analysis}. 
In particular, we prove the convergence of the update of \Eq{eq:update} under the null hypothesis. 

Once the parameters $\theta_{v,t+1}$ has been estimated we compute the statistic $\hat{g}_t:(\hat{g}_{1,t},...,\hat{g}_{t,n})$:
\begin{equation}\label{eq:score}
    \hat{g}_t= \frac{1}{\Npre} \sum_{y_j \in\setpre}\theta_{t+1} K_G(y_j) \in \real^{\Gdims}.
\end{equation}
$\hat{g}_t+\ones{\Gdims}$ is an approximation of the likelihood-ratio at time $t$. Furthermore, this is an asymptotic unbiased estimator when %$t$ goes to infinity
$t \rightarrow \infty$. We finally detect a change-point when $\norm{\hat{g}_t}>\epsilon$. 

\subsection{Dictionary learning}

In the previous section, the cost function $F_t(\theta)$ required having $N$ dictionaries, each one describing the Hilbert space $H_v$ associated with node $v$. This can be hard in practice, especially since we expect that at some point the observations will start being different to what will have been seen till then. Therefore, it is important to enrich the dictionary with new observations in an online manner, which would improve the quality of $\hat{g}_t$. %
This specific problem has been studied in the context of online prediction for time-series \cite{Cedric2009}. In that paper, the authors address the problem of including new observations to a dictionary while controlling its size and the sparse representation of the datapoints. They achieve that by introducing the \emph{coherence} measure that quantifies the similarity of the new point to the elements already present in the dictionary. If this is smaller that a given threshold $\mu_0$, the new datapoint is added into the dictionary, hence the dimension of the dictionary $L_v$ will increase its size by one and, as a consequence, $\theta_v$'s dimension increases as well. The update of \Eq{eq:update} reflects that, as also shown in \Alg{alg:OKGD}.

\begin{algorithm}[!t]
\footnotesize
\SetAlgoLined
\SetKwInOut{Input}{input}
\SetKwInOut{Output}{output}
\Input{
$\lambda$, $\gamma$: penalization constants \\
$\mu_0$: coherence parameter \\
$\bp$: how many of the first observations to be used to \\ \ \ \ \ generate the initial dictionary of each node\\
$\Npre$, $\Npost$: sizes of the detection windows% used to detect \\ \ \ \ \ a change-point
\\
$\epsilon$: threshold used to detect a change-point
}
\Output{$\hat{\tau}$: the detection time of the change}

%-------------------------------------------
%\phase{Learning phase}\vspace{-0.7mm}%
%\textbf{I. Dictionary initialization}
\raisebox{0.25em}{{\scriptsize$_\blacksquare$}}~\textbf{Dictionary initialization}

\For{$v \in \{1,...,\Gdims\}$} {
$L_v = 1$

% Observe $y_{v,1}$, include it in the dictionary and denote it as $k_v(y_{v,t_1})$
Observe $k_v(y_{v,t_1}) := k_v(y_{v,1})$ and add it to the dictionary\!\!\!\!\!\!\!\!\!\!\!\!

\For{$t \in \{2,...,\bp\}$}{
Observe $y_{v,t}$

\For{$v \in \{1,...,\Gdims\}$} {
	Update $B_{v,t}$ and $c_{v,t}$ (\Expr{eq:update})
}

\If{$\max_{l \in \{1,...,L_v \}} k_v(y_{v,t},y_{v,l}) \leq \mu_0$}{
%Include $k_v(y_{v,t})$ in the dictionary, increase $L_v$ by one and denote the new element as $y_{v,L_v}$
$L_v := L_v + 1$

Add $y_{v,L_v} := k_v(y_{v,t})$ to the dictionary
}
}
Create the set $Y^{\prechange} = \{y_{j}\}_{j=1}^{\bp}$
}
%-------------------------------------------
%\textbf{II. Online estimation and detection}
\raisebox{0.25em}{{\scriptsize$_\blacksquare$}}~\textbf{Online estimation and detection}

\For{$t \in \{ \bp+N^{\mathrm{\postchange}},...\}$} {
Sample $\Npre$ observations from $Y^{\prechange}$ and update the set 
$\setpre$\!\!\!\!\!\!\!\!\!\!\!\!

Observe $y_{v,t}$ and update the sliding window $Y_t^{\postchange}$ (\Eq{eq:windows})\!\!\!\!\!\!\!\!\!\!

\raisebox{0.25em}{{\scriptsize$_\square$}}~\textbf{Dictionary update}

\uIf{$\max_{l \in \{1,...,L_v \}} k_v(y_{v,t},y_{v,l}) \leq \mu_0$}{

%Increase $L_v$ by one
$L_v = L_v + 1$

%Include $k_v(y_{v,t})$  in the dictionary and denote it as $k_v(y_{v,t_{m+1}})$
Add $k_v(y_{v,t_{L_v}}) := k_v(y_{v,t})$ to the dictionary

$\vartheta = \begin{bmatrix} \theta_{v,t-1}\\ 0 \end{bmatrix}$

}

\Else{

$\vartheta = \theta_{v,t-1}$

}

\raisebox{0.25em}{{\scriptsize$_\square$}}~\textbf{Parameters update}

\For{$v \in \{1,...,\Gdims\}$} {

$C_{v,t} = (1+\lambda d_v)  \norm{H_{v,t}}_2+\gamma
+ \lambda M_v \sum_{u \in \nghood(v)} W_{uv}  M_u$\!\!\!\!\!\!\!\!\!\!\!\!\!

$\alpha_{v,t} =\min\big(\frac{c}{t-(\bp+\Npost-1)},\frac{1}{C_{v,t}}\big)$

$\theta_{v,t} = \vartheta - \alpha_{v,t}\left( B_{v,t} \vartheta + c_{v,t} \right)$\!\!\!\!\!\!\!\!
}

%}
%\Else{
%\For{$v \in \{1,...,p\}$} {
%$L_{v,t} := (1+\lambda d_v)  \norm{H_{v,t}}+\gamma
%+ \lambda M_v \sum_{u \in \nghood(v)} W_{uv}  M_u$\!\!\!\!\!\!\!\!\!\!\!\!\!
%
%$\alpha_{v,t} := \min(\frac{c}{t-(\bp+\Npost-1)},\frac{1}{L_{v,t}})$
%
%$\theta_{v,t} := \theta_{v,t-1}-\alpha_{v,t}\left( B_{v,t} \theta_{v,t-1} + c_{v,t} \right)$
%}
%}
%-------------------------------------------
\raisebox{0.25em}{{\scriptsize$_\square$}}~\textbf{Online detection}

Compute the score $\hat{g}_t:=(\hat{g}_{1,t},...,\hat{g}_{\Gdims,t})$

\uIf{ $\norm{\hat{g}_t} > \epsilon$}{
A change-point is detected at $\hat{\tau}=t$
}
\Else{
$Y^{\prechange} = Y^{\prechange} \cup \{y_{t-\Npost}\}$
}
}
\caption{Online Kernel Graph Detector ($\OKGD$)}{\label{alg:OKGD}}
 \vspace{-1mm}
\end{algorithm}

\section{Theoretical analysis}\label{sec:Theorical_Analysis}

In this section, we show how the cost function $\textbf{F}$ of \Eq{eq:expected_cost} admits a unique solution $\theta^{*}$, and also that the induced estimator $g(\cdot,\theta^{*})$ is an unbiased estimator of the vector of likelihood-ratios $r(y)-\ones{\Gdims}$ under the null hypothesis.

As we have pointed out earlier, we do not have access to all the data observations, so it is impossible to estimate $\theta^{*}$ directly. For this reason we analyze the convergence of the estimator $\theta_t$ %described in \Eq{eq:update}
(\Eq{eq:update}) under the null hypothesis, as well as the asymptotic behavior of the associated estimator $\hat{g}_t$ %described in \Eq{eq:score}.
(\Eq{eq:score}).

Here, we focus only on the case where we have a fixed dictionary, which means no elements are added later and the dimension of $\theta$ remains constant through time.

\begin{assumption}\label{assum:kernel}

The Hilbert space $H_v$ is separable and there exists a constant $M_v>0$ such that:
\begin{equation}\label{eq:em}
\sup_{(x,y) \in H_v x H_v} k_v(x,y) \leq M_v < \infty.
\end{equation}
\end{assumption}
\noindent\Assumption{assum:kernel} is true for Gaussian and Laplacian kernels, and has %previously 
been used in \cite{Harchaoui2008}.

The fact that the space $H_v$ is separable and the boundness assumption over $k_v$ allow us to define the Bochner mean $\mu_{v} \in \Hilbert_v$ also called the mean element $p_v(y_{v})$ which is defined as the element such that $\forall h_v \in \Hilbert_v$: 
\begin{equation}
\begin{aligned}
\dott{\mu_{v}}{y_{v}}_{\Hilbert_v} &=\ExpecNm{h_v(y_{v})}{v}  = \ExpecNm{\dott{y_{v}}{h_v}_{\Hilbert_v}}{v}.\!\!\!\!
\end{aligned}
\end{equation}

\vspace{-1mm}
\begin{assumption}\label{assum:independence}
The observations of the time-series $y_1,..,y_t$ are independent in time.
\end{assumption}

This is a standard hypothesis in kernel-based change-point detection literature \cite{Arlot2019,Li2019,Harchaoui2008,Bouchikhi2019}.

\begin{comment}
If $k_v$ is characteristic, it means that the function that associates every \NOTE{probabilistic} function $p_v(y_{i,v})$ to the element $\mu_{i,v}$ is injective. This means that any change in the \pdf $p_v(y_{i,v})$ will imply a change in $\mu_{i,v}$. 
Example of characteristic kernels are the Gaussian and the Laplacian kernels.
These properties are generalizable to the Hilbert space $\Hilbert=\Hilbert_1x....x\Hilbert_{\Gdims}$ with kernel function defined as $k(y_i)= \sum_{v \in V} k(y_{i,v})$. 
\end{comment}

\begin{theorem}\label{Th:main_result}
\Problem{eq:expected_cost} admits a unique solution $\theta^*$. Furthermore, under the null hypothesis, it holds $\theta^*=0$.
\end{theorem}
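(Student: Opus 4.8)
The plan is to read \Problem{eq:expected_cost} as the unconstrained minimization of the quadratic $\mathbf{F}(\theta)=\tfrac12\theta^\top\ExpecNA{A_t}\theta+\theta^\top\ExpecNA{b_t}$ over $\theta\in\real^{L}$, and to argue in two steps: (i) the Hessian $\ExpecNA{A_t}$ is positive definite, which yields existence and uniqueness of the minimizer $\theta^*$; (ii) under the null hypothesis the linear part $\ExpecNA{b_t}$ vanishes, which forces $\theta^*=\zeros{L}$.

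For step (i), I would first use \Assumption{assum:kernel} to ensure the three expectations defining $\ExpecNA{A_t}$ are finite: boundedness of each $k_v$ bounds every entry of $K_G(y_j)$ and of the rank-one blocks $k_v(y_{v,j})k_v(y_{v,j})^\top$. It has already been noted after \Eq{eq:final_problem} that each realization $A_t=H_t^{\prechange}+\lambda\sum_{y_j\in\setpre}\tfrac{K_G(y_j)\Lpc K_G(y_j)^\top}{\Npre}+\gamma\id_L$ is positive definite: $H_t^{\prechange}\succeq0$ as a nonnegative sum of rank-one matrices, $K_G(y_j)\Lpc K_G(y_j)^\top\succeq0$ since the combinatorial Laplacian satisfies $\Lpc\succeq0$ (see \Eq{eq:smooth2}), and $\gamma\id_L\succ0$. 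Hence $A_t\succeq\gamma\id_L$ for every realization, and by monotonicity of expectation $\ExpecNA{A_t}\succeq\gamma\id_L\succ0$. Therefore $\mathbf{F}$ is strictly convex and coercive, so it admits a unique minimizer, given by the normal equation $\theta^*=-\ExpecNA{A_t}^{-1}\ExpecNA{b_t}$.

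For step (ii), I would compute $\ExpecNA{b_t}$ blockwise. Writing $b_t=h_t^{\prechange}-h_t^{\postchange}$ with $h_{v,t}^{\prechange}=\tfrac1{\Npre}\sum_{y_j\in\setpre}k_v(y_{v,j})$ and $h_{v,t}^{\postchange}=\tfrac1{\Npost}\sum_{y_j\in\setpost}k_v(y_{v,j})$, linearity of expectation together with the fact that the $v$-marginals within $\setpre$ (resp. $\setpost$) are $p_v$ (resp. $p_v'$) give $\ExpecN{h_{v,t}^{\prechange}}=\ExpecNm{k_v(y_v)}{v}=\mu_v$ and $\ExpecA{h_{v,t}^{\postchange}}=\mu_v'$, the mean (Bochner) elements of $p_v$ and $p_v'$, which are well-defined in $\Hilbert_v$ by separability and \Assumption{assum:kernel}. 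Under the null $p_v=p_v'$, so $\mu_v=\mu_v'$ for every $v$; stacking over nodes yields $\ExpecNA{b_t}=\zeros{L}$, and then $\theta^*=-\ExpecNA{A_t}^{-1}\zeros{L}=\zeros{L}$, i.e. $\theta=\zeros{L}$ is the unique minimizer.

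I expect the genuinely delicate part to be the measure-theoretic bookkeeping rather than the algebra: one must invoke \Assumption{assum:kernel} (boundedness and separability) to guarantee that the mean elements $\mu_v,\mu_v'$ exist and that the matrix- and vector-valued expectations in $\mathbf{F}$ are finite, so that pushing the expectation through the finite window averages and the monotonicity argument for $\ExpecNA{A_t}$ are legitimate. Once this is in place, positive definiteness of $\ExpecNA{A_t}$ and the cancellation $\mu_v-\mu_v'=0$ under the null are immediate from the structure already exhibited in the problem formulation.
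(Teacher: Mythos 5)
Your proposal is correct and follows essentially the same route as the paper's own proof: positive definiteness of $\ExpecNA{A_t}$ (the Laplacian and kernel terms are positive semi-definite, and the $\gamma\id_L$ term makes the sum strictly positive definite), hence a unique minimizer solving the normal equation, and under the null hypothesis $\ExpecNA{b_t}=\zeros{L}$ because the pre- and post-change mean elements coincide, forcing $\theta^*=\zeros{L}$. Your extra care about integrability via \Assumption{assum:kernel} and your sign $\theta^*=-\ExpecNA{A_t}^{-1}\ExpecNA{b_t}$ are both fine (the paper even notes the result does not require that assumption, and omits the minus sign, which is immaterial under the null), but they do not change the argument.
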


The proof of \Theorem{Th:main_result} (see Appendix) does not require \Assumption{assum:kernel} nor \Assumption{assum:independence}. The main implication is that the estimator defined in \Eq{eq:kerneldict} $g_v(,\theta_v^*)=0$ $\forall v \in V$. This means that, when no change has occurred, $g_v(\cdot,\theta_v^*)$ is an unbiased estimator for $r_v(\cdot)-1$ . 

\begin{theorem}\label{Th:convergence}
Consider the case when $p(y)=p'(y)$ and \Assumption{assum:kernel} and \Assumption{assum:independence}. Let $\theta_{t+1}$ be generated as described in \Eq{eq:update} with $\alpha_{v,t}<\frac{1}{L}$ $\forall v,t$. If $\sigma=\sup_t \sigma_t < \infty $ and there exists $M$ as defined in \Eq{eq:em}, then: 
\begin{equation}
    \ExpecN{\norm{\theta_{t+1}}^2} \leq  \frac{\ExpecN{\norm{\theta_{t}}^2}}{1+\frac{1}{2} \gamma \alpha_t}+2D \frac{\alpha_t^2}{1+\frac{1}{2} \gamma\alpha_t}.
\end{equation}
\noindent In particular when $\alpha_{v,t}=\frac{c}{t} < \frac{1}{L}$ $\forall v \in V,t$:
\begin{equation}
    \ExpecN{\norm{\theta_{t+1}}^2} \leq  \frac{1}{t} \max\left\{\!\frac{4D c(1+\frac{1}{2} c \gamma)}{\gamma} ),\norm{\theta_{1}}^2\!\right\},\!\!\!
\end{equation}%
%where 
\begin{equation*}
\begin{aligned}
&\text{where} \ \ D=\frac{\Gdims \sigma^2}{1- L c} + \sqrt{\Gdims \rho^2}
\Big( A+ L \sqrt{ \Gdims(4 L^2 \rho^2  +4\sigma^2_t)} \Big), \\
& \lambda^{\text{\upshape{max}}}_v = \norm{\ExpecN{H_{v,t}}}_2 \text{  and  }
\end{aligned}
\end{equation*}
%and 
\begin{equation}
\begin{aligned}
\!\!\!\!\!\!L &= \max_{v \in V} \left\{\!(1\!+\!\lambda d_v) \lambda^{\text{\upshape{max}}}_v+\gamma  + \lambda M_v \!\!\!\sum_{u \in \nghood(v)}  W_{uv} M_u \!\right\}\!\!\!\!\!\\
&= \max_{v \in V} C_v. \\
\end{aligned}
\end{equation}
\end{theorem}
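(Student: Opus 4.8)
The plan is to recognize \Eq{eq:update} as the Block Stochastic Gradient Descent recursion of \cite{Xu2015} applied to the quadratic program \Eq{eq:expected_cost}, and to run the classical strongly-convex stochastic-gradient argument with one extra term accounting for the Gauss--Seidel-type coupling between blocks. Two structural observations are the backbone. First, the Hessian of $\textbf{F}$ equals $\ExpecNA{A_t}=\ExpecNA{H^{\prechange}_t}+\lambda\,\ExpecNA{\sum_{y_j\in\setpre}K_{G}(y_j)\Lpc K_{G}(y_j)^{\top}/\Npre}+\gamma\id_L$; each $H^{\prechange}_{v,t}$ is positive semidefinite and the combinatorial Laplacian $\Lpc$ is positive semidefinite, so $\ExpecNA{A_t}\succeq\gamma\id_L$ and $\textbf{F}$ is $\gamma$-strongly convex, while a blockwise Lipschitz constant of $\nabla\textbf{F}$ is exactly $L=\max_v C_v$ (the summand $(1+\lambda d_v)\lambda^{\max}_v+\gamma$ of $C_v$ controls the diagonal block, and $\lambda M_v\sum_{u\in\nghood(v)}W_{uv}M_u$ controls the off-diagonal coupling through \Assumption{assum:kernel}). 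Second, by \Theorem{Th:main_result} the unique minimizer under the null is $\theta^{*}=0$, and since $b_t=h^{\prechange}_t-h^{\postchange}_t$ with $h^{\prechange}_t,h^{\postchange}_t$ having a common expectation when $p(y)=p'(y)$, also $\ExpecNA{b_t}=0$; hence $\ExpecN{\norm{\theta_{t+1}}^2}=\ExpecN{\norm{\theta_{t+1}-\theta^{*}}^2}$ and, conditionally on the past, $B_{v,t}\theta_{v,t}+c_{v,t}$ is an unbiased estimate of $\Delta_{\theta_v}\textbf{F}(\theta_t)$ up to the staleness term discussed below.

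For the one-step recursion I would condition on the history $\mathcal{F}_t$ and expand over blocks,
\begin{equation*}
\norm{\theta_{t+1}}^2-\norm{\theta_t}^2=\sum_{v\in V}\Big(-2\alpha_{v,t}\dott{\theta_{v,t}}{B_{v,t}\theta_{v,t}+c_{v,t}}+\alpha_{v,t}^2\norm{B_{v,t}\theta_{v,t}+c_{v,t}}^2\Big),
\end{equation*}
then decompose $B_{v,t}\theta_{v,t}+c_{v,t}=\Delta_{\theta_v}F_t\big((\theta_{t+1})_{<v},(\theta_t)_{\geq v}\big)$ into (i) the true partial gradient $\Delta_{\theta_v}\textbf{F}(\theta_t)$, (ii) the fluctuation $\Delta_{\theta_v}F_t(\theta_t)-\Delta_{\theta_v}\textbf{F}(\theta_t)$, of zero conditional mean against the $\mathcal{F}_t$-measurable $\theta_{v,t}$ and with conditional second moment bounded by $\sigma_t^2\leq\sigma^2$, and (iii) the staleness error $\Delta_{\theta_v}F_t((\theta_{t+1})_{<v},(\theta_t)_{\geq v})-\Delta_{\theta_v}F_t(\theta_t)$, which by Lipschitz continuity of $\nabla F_t$ is at most a constant times $\sum_{u<v}\alpha_{u,t}\norm{B_{u,t}\theta_{u,t}+c_{u,t}}=\mathcal{O}(\alpha_t)$. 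Summing over $v$: term (i) gives $-2\gamma\alpha_t\norm{\theta_t}^2$ by $\gamma$-strong convexity around $\theta^{*}=0$; (ii) drops out of the cross term in conditional expectation and leaves only an $\mathcal{O}(\alpha_t^2\sigma^2)$ contribution from the squared term; and (iii), together with the rest of $\alpha_{v,t}^2\norm{B_{v,t}\theta_{v,t}+c_{v,t}}^2$, splits after Young's inequality and the crude bound $\norm{B_{v,t}\theta_{v,t}+c_{v,t}}^2\leq\mathcal{O}(\norm{\theta_t}^2)+\mathcal{O}(1)$ (constants depending on $M_v,\lambda,d_v,\gamma$) into an $\mathcal{O}(\alpha_t^2\norm{\theta_t}^2)$ part and an $\mathcal{O}(\alpha_t^2)$ part. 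Imposing $\alpha_{v,t}<1/L$ lets the $\mathcal{O}(\alpha_t^2\norm{\theta_t}^2)$ pieces be absorbed into the contraction, so $\ExpecN{\norm{\theta_{t+1}}^2\mid\mathcal{F}_t}\leq(1-2\gamma\alpha_t)\norm{\theta_t}^2+D\alpha_t^2$ with $D$ the constant collecting the noise pieces (its $N$, $\sigma^2$, $\rho^2$ and $(1-Lc)^{-1}$ factors come exactly from summing the per-block terms and from the Young weights). Using $1-2\gamma\alpha_t\leq(1+\tfrac12\gamma\alpha_t)^{-1}$ and $\gamma\alpha_t<1$ (valid because $L\geq\gamma$, so $1+\tfrac12\gamma\alpha_t\leq2$), then taking total expectation, yields the first displayed bound.

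For $\alpha_{v,t}=c/t$ I would conclude by induction on $t$: with $Q=\max\{4Dc(1+\tfrac12 c\gamma)/\gamma,\ \norm{\theta_1}^2\}$ the base case $t=1$ is immediate, and assuming $\ExpecN{\norm{\theta_t}^2}\leq Q/t$ one substitutes $\alpha_t=c/t$ into the recursion, clears the denominator $1+\tfrac{\gamma c}{2t}$, and uses $(t+1)/t\leq2$, whereupon the choice of $Q$ closes the inequality $\ExpecN{\norm{\theta_{t+1}}^2}\leq Q/(t+1)$. The main obstacle is item (iii): unlike plain SGD, the blocks are updated sequentially within a sweep, so $B_{v,t}\theta_{v,t}+c_{v,t}$ is evaluated at a point already carrying the fresh iterates $(\theta_{t+1})_{<v}$; showing that this staleness costs only $\mathcal{O}(\alpha_t^2)$ — not $\mathcal{O}(\alpha_t)$ — in the recursion is precisely what forces the step-size restriction $\alpha_{v,t}<1/L$ and pins down the form of $D$, and it is the single place where the block structure genuinely enters the convergence analysis.
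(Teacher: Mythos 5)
Your overall strategy coincides with the paper's: recognize \Eq{eq:update} as BSGD applied to the strongly convex quadratic $\textbf{F}$ of \Eq{eq:expected_cost}, with minimizer $\theta^*=0$ under the null (\Theorem{Th:main_result}), strong convexity supplied by the $\gamma\id_L$ term, blockwise Lipschitz constant $L=\max_v C_v$ obtained from \Assumption{assum:kernel}, a bias of order $\max_u\alpha_{u,t}$ created by the Gauss--Seidel staleness, and variance $\sigma_t^2$. The only real methodological difference is that the paper does not rederive the stochastic-gradient recursion: it verifies the four assumptions of \cite{Xu2015} (Assumptions A.1--A.4 in the appendix, with the explicit computation of $C_v$, $A$, $\sigma_{v,t}$, $M_\rho$ and $D$) and then invokes Theorem A.\ref{Th:xu2015} together with an intermediate inequality of that proof, whereas you propose to inline the classical strongly convex SGD argument with a staleness correction. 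That inlining is legitimate, but two of its steps are thinner than what the paper actually has to do. First, your constant $D$ contains $\rho$, yet nothing in your argument produces the bound $\norm{\theta_t}\leq\rho$: in the paper this is Assumption A.\ref{ass_a:boundness_x}, justified by restricting the problem to the ball $\mathbb{B}_\rho$ (possible because $\theta^*=0$) and replacing the step by a projected step, and it is also used inside the variance bound that defines $\sigma_{v,t}$. Likewise, the "zero conditional mean" of your fluctuation term and the $\mathcal{O}(\alpha_t)$ control of the staleness both rest on \Assumption{assum:independence} plus the uniform resampling of $\setpre$; the paper's longest computation is exactly the bound $\norm{\ExpecN{\delta_{v,t}\mid\setprebef}}\leq A\max_u\alpha_{u,t}$, which you should not wave away.

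The genuine gap is the final induction. With the recursion of the first display and $\alpha_t=c/t$, the step $\ExpecN{\norm{\theta_t}^2}\leq Q/t\Rightarrow\ExpecN{\norm{\theta_{t+1}}^2}\leq Q/(t+1)$ reduces, after clearing denominators, to $Q\big(\tfrac{\gamma c}{2}(1+\tfrac1t)-1\big)\geq 2Dc^2(1+\tfrac1t)$, which forces $\gamma c>2$ (and in fact $\gamma c\gtrsim 3$ with the stated $Q$). But the theorem's own stepsize condition $\alpha_{v,t}=c/t<1/L$ for all $t\geq 1$, together with $L\geq\gamma$, gives $\gamma c<1$; in that regime the recursion alone only yields a rate of order $t^{-\gamma c/2}$, not $1/t$, so "the choice of $Q$ closes the inequality" does not hold as written. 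This is precisely the step the paper does not reprove: it imports the $1/t$ bound wholesale from Theorem A.\ref{Th:xu2015} of \cite{Xu2015}, whose stepsize conditions are not the plain $c<1/L$ you work under. To repair your self-contained proof you would either need to impose (and state) a lower bound on $\gamma c$, or reproduce the argument of \cite{Xu2015} rather than the naive induction.
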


The proof of \Theorem{Th:convergence} is given in the Appendix. Notice that the first inequality gives an upper-bound for the expected value of the norm of the estimation $\theta_t$ at time $t$, which depends on: the previous estimation $\theta_{t-1}$, the step-size $\alpha_k$, and the constant $D$. $D$ %is a constant that 
depends on: the upper-bound of the kernel values, the approximation error $\sigma_t$ and the graph structure. Furthermore, we can see that by reducing the learning rate to linear, $\theta_t$ converges to zero, which is the value of $\theta^*$ under the null hypothesis. In practice, we achieve a good performance by following the recommendation of \cite{Xu2015} and by fixing the learning rate at $\alpha_{v,t}= \min\{\frac{c}{t}, \frac{1}{C_{v,t}} \}$, where $C_{v,t}$ is computed as in \Alg{alg:OKGD}.

Another important consequence of \Theorem{Th:convergence} is:% the following result: 

\begin{corollary}\label{cor:convergence}
Under the conditions described in \Theorem{Th:convergence}, the estimator $\hat{g}_t$ defined in \Expr{eq:score} is an unbiased asymptotic estimator of the vector $r(y)-\ones{\Gdims}$.
\end{corollary}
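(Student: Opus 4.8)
The plan is to establish directly that $\ExpecN{\hat{g}_t}\to r(y)-\ones{\Gdims}$ as $t\to\infty$, where the expectation is taken under the pre-change law. The decisive preliminary remark is that \Corollary{cor:convergence} is stated under the hypotheses of \Theorem{Th:convergence}, namely the null case $p(y)=p'(y)$; there, for every node $v$ the likelihood-ratio equals $r_v(y)=p'_v(y)/p_v(y)=1$, so the target collapses to $r(y)-\ones{\Gdims}=\zeros{\Gdims}$. This is not a degeneracy to be swept aside but precisely the object to be matched: by \Theorem{Th:main_result} the population minimizer is $\theta^*=0$, so the ideal estimator $g(\cdot,\theta^*)$ of \Eq{eq:kerneldict} is identically $\zeros{\Gdims}$ and therefore equals $r(y)-\ones{\Gdims}$ exactly. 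Asymptotic unbiasedness for the target $r(y)-\ones{\Gdims}$ is thus equivalent to showing $\ExpecN{\hat{g}_t}\to\zeros{\Gdims}$, and the task reduces to transferring the $L^2$-convergence of the iterates $\theta_t$ supplied by \Theorem{Th:convergence} onto the statistic $\hat{g}_t$.

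First I would convert the second-moment bound into a first-moment one. With the step-size $\alpha_{v,t}=c/t$, \Theorem{Th:convergence} gives $\ExpecN{\norm{\theta_{t+1}}^2}\le \tfrac{1}{t}\max\{\ldots\}$; Jensen's inequality then yields $\ExpecN{\norm{\theta_{t+1}}}\le\sqrt{\ExpecN{\norm{\theta_{t+1}}^2}}=O(t^{-1/2})\to 0$.

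Next I would bound $\hat{g}_t$ coordinatewise. Reading \Expr{eq:score} blockwise gives $\hat{g}_{v,t}=\theta_{v,t+1}^\top h_{v,t}^{\prechange}$ with $h_{v,t}^{\prechange}=\tfrac{1}{\Npre}\sum_{y_j\in\setpre}k_v(y_{v,j})$. \Assumption{assum:kernel}, through the bound $M_v$ of \Eq{eq:em}, forces $\norm{k_v(y_{v,j})}\le\sqrt{L_v}\,M_v$ almost surely, whence $\norm{h_{v,t}^{\prechange}}\le\sqrt{L_v}\,M_v=:B_v$ almost surely and uniformly in $t$. Applying Cauchy--Schwarz inside the expectation,
\[
\bigl|\ExpecN{\hat{g}_{v,t}}\bigr|\le \ExpecN{\norm{\theta_{v,t+1}}\,\norm{h_{v,t}^{\prechange}}}\le B_v\,\ExpecN{\norm{\theta_{v,t+1}}}\le B_v\,\ExpecN{\norm{\theta_{t+1}}}\longrightarrow 0.
\]
Collecting the $\Gdims$ coordinates gives $\norm{\ExpecN{\hat{g}_t}}\to 0$, i.e. $\ExpecN{\hat{g}_t}\to\zeros{\Gdims}=r(y)-\ones{\Gdims}$, which is exactly the claimed asymptotic unbiasedness; the same chain in fact produces the rate $O(t^{-1/2})$.

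The one place requiring care --- and the obstacle the argument must circumvent --- is the statistical dependence between the iterate $\theta_{v,t+1}$ and the empirical mean $h_{v,t}^{\prechange}$: both are measurable functions of the same observations in $\setpre$, so $\ExpecN{\theta_{v,t+1}^\top h_{v,t}^{\prechange}}$ cannot be factored as a product of expectations. My resolution is to make no independence claim at all and instead exploit the deterministic almost-sure envelope $B_v$ on $\norm{h_{v,t}^{\prechange}}$ granted by \Assumption{assum:kernel}, which lets me pull $B_v$ out of the expectation and reduce the whole statement to the first-moment convergence $\ExpecN{\norm{\theta_{t+1}}}\to 0$ already established above.
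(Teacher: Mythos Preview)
Your argument is correct and follows essentially the same route as the paper's own proof: both recognize that under the null the target is $\zeros{\Gdims}$, invoke the kernel bound of \Assumption{assum:kernel} to control the $K_G$-part deterministically, and then push through the $\theta_t$-convergence supplied by \Theorem{Th:convergence}. The only cosmetic difference is that the paper bounds the squared norm directly, obtaining $\ExpecN{\norm{\hat{g}_t}^2}\le\bigl(\sum_{v}L_vM_v^2\bigr)\ExpecN{\norm{\theta_t}^2}=O(1/t)$ and hence $L^2$-convergence, whereas you take one extra Jensen step to work in $L^1$ coordinatewise and land on the $O(t^{-1/2})$ rate; either version yields the stated asymptotic unbiasedness.
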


\section{Experiments}\label{sec:exps}

\subsection{Use-cases on synthetic data}

Our intention is to validate the use of the graph structure in the change-point detection task and to show the different scenarios in which our method is able to operate successfully. We design two synthetic scenarios caused by changes occurring with different spatial characteristics. %
Both scenarios use the same graph sampled from a Stochastic Block Model with $4$ clusters, C$1$,...,C$4$, each having $20$ nodes. The intra- and inter-cluster connection probability for node pairs is fixed at $0.5$ and $0.01$, respectively. %
To illustrate that we can handle data heterogeneity, the time-series of the nodes in each cluster are generated by different probabilistic models (see details in the Appendix). % as follows: 
%%
%\begin{itemize}
    %%\item Cluster $1$: 
    %\item C$1$: 
		%bivariate a Gaussian distribution with mean vector equal to the vector of zeros and with covariance matrix equal to the identity matrix. 
    %%\item Cluster $2$: 
		%\item C$2$: a Poisson distribution with expected value fixed at $5$.
    %%\item Cluster $3$:
		%\item C$3$:
		%a bivariate Gaussian distribution with mean vector equal to the vector of zeros, variance values fixed at $1$, and the covariance %between dimensions 
		%fixed at $0.75$.
    %%\item Cluster $4$: 
		%\item C$4$: a Poisson distribution with expected value fixed at $10$.
%\end{itemize}

The change that we introduce in the time-series is as follows: the nodes of the cluster C$1$ start behaving as in C$3$, and vice-versa (\ie a change in the node correlation structure). Moreover, the nodes of C$2$ increase its expected value as in C$4$, while the nodes of C$4$ reduce it as in C$1$.

In the experiments we compare three algorithms. The first is the one of \cite{Ferrari2020} (NOUGAT-based): it consists in estimating the likelihood-ratio in a kernel-based way, and then applying a low-frequency Graph filter aiming to spot nodes having a likelihood-ratio different from zero, under the hypothesis that they will be elements of a cluster in the graph. %We call this method NOUGAT-based. % in the experiments. 
It uses a constant learning rate for all nodes, and it monitors the quantity $\Vert G\!F\!S\!S(\hat{l}_t)\Vert$, where $G\!F\!S\!S(\cdot)$ is the Graph Fourier Scan Statistic and $\hat{l}_t =\theta_t K_G(y_{t+1})$. We also use: the \OKGD method with $\lambda=0$, which is when the graph structure is ignored (OKGD no graph), and finally \OKGD with $\lambda > 0$ %(\OKGD) 
that is our complete proposal integrating the graph in the detection algorithm. 

We fix the burning period at $\bp = 100$ observations that are used to generate the initial dictionary for each node. The coherence parameter is fixed at $\mu_0 = 0.5$. We use a Gaussian kernel with scale parameter tuned via the median heuristic over the $\bp$ observations. % of the burning period. 
Moreover we set $\gamma = 10$, and $\Npre = \Npost = 100$ for the size of the windows. %
The detection threshold $\epsilon$ was set adeptly, as we know $\hat{g}_t$ will converge towards zero. We fix $\epsilon_t$ at $1.5$ times the mean of $\{\norm{\hat{g}}_j\}_{j=1}^{t}$. For the last algorithm (\OKGD), we fix $\lambda = \frac{10}{\bar{d}}$, where $\bar{d}$ is the average node degree in $G$. 

\inlinetitle{Changes at arbitrary random graph locations}{.} %
In the first experiment $10$ nodes at random suffer a change in their distribution, as described previously. %in the previous subsection. 
An instance of this scenario can be seen in \Fig{fig:random_locations}. We can see that the NOUGAT-based approach fails as its ``changing cluster'' hypothesis is not satisfied. Contrary, our method is able to spot the change even when the graph is not taken into account. %
After averaging over $100$ instances of this scenario, we can see in \Tab{tab:synthetic experiments} how the \OKGD method performs the best in terms of the observed expected detection delay and precision. We can see, that the NOUGAT fails here to detect the change-point and has a precision of just $36 \%$.

%
%\begin{figure}[H]
%\begin{minipage}[b]{\linewidth}
  %\centering
  %\centerline{
  %\includegraphics[width=\linewidth]{random.pdf}}
%\end{minipage}
%\label{fig:CP_GS}
    %\caption{Comparison between the detection score related with the  algorithms: NOUGAT-based (\textbf{Green}), \OKGD without graph structure (\textbf{Orange}) and \OKGD. In this instance $10$ nodes were picked at random and change the associated marginal density.} \label{fig:random_locations}
%\end{figure}
\begin{figure}[t]
\centering
%\ \ \ \ \includegraphics[width=0.045\textwidth, viewport=435 235 575 280, clip]{random.pdf}\\
\vspace{-3mm}
\!\!\!\!\!\!\!\!\!\subfigure[Changes at random locations]{\label{fig:random_locations}
\includegraphics[width=0.245\textwidth, viewport=0 8 430 300, clip]{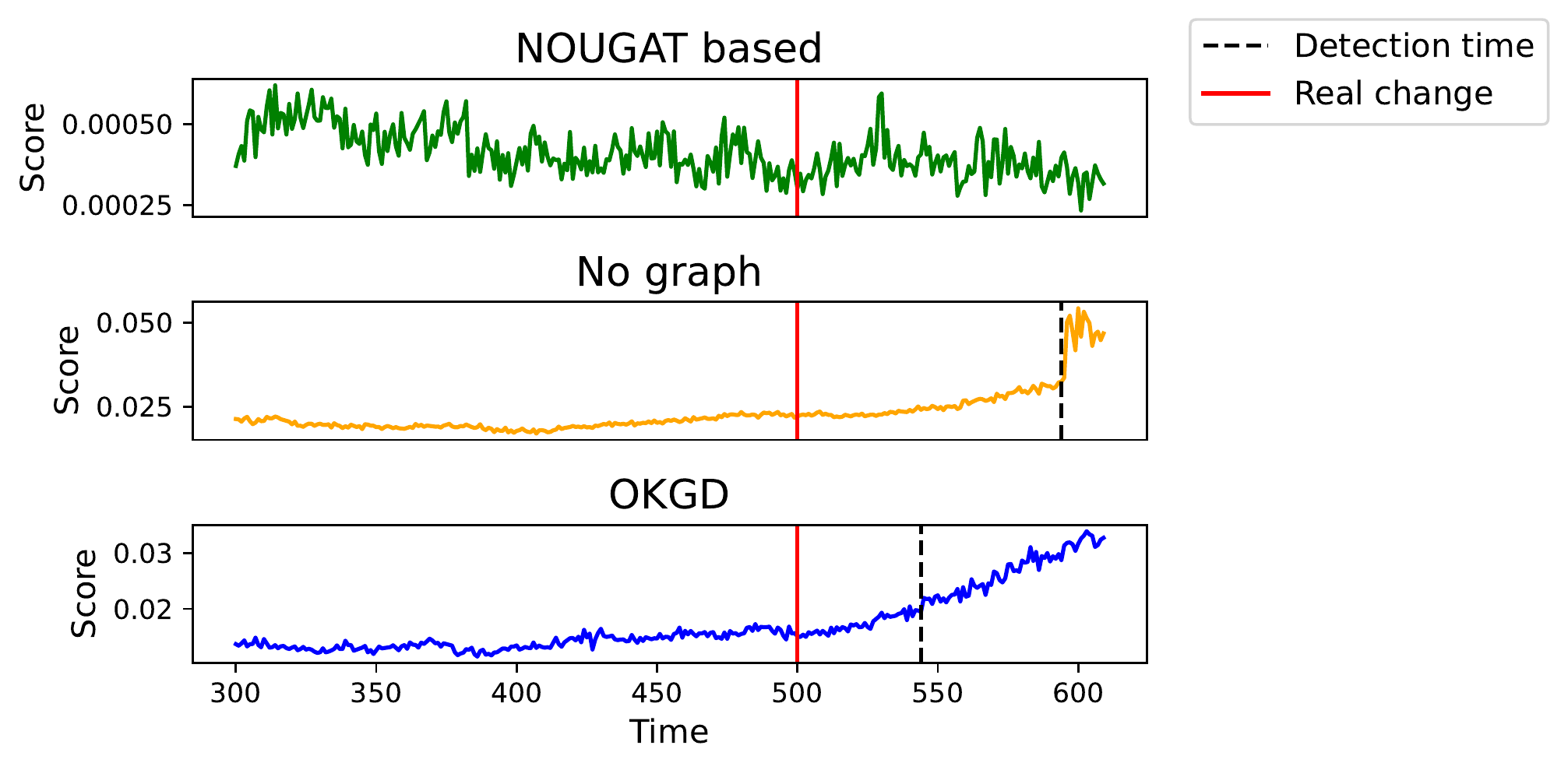}
}%0.4
\llap{\raisebox{2.5cm}{%  move next graphics to top right corner
      \includegraphics[width=0.07\textwidth, viewport=435 235 575 280, clip]{images/random.pdf}
    }}%
\subfigure[Changes in a graph cluster]{\label{fig:clusters}
\includegraphics[width=0.245\textwidth, viewport=0 8 430 300, clip]{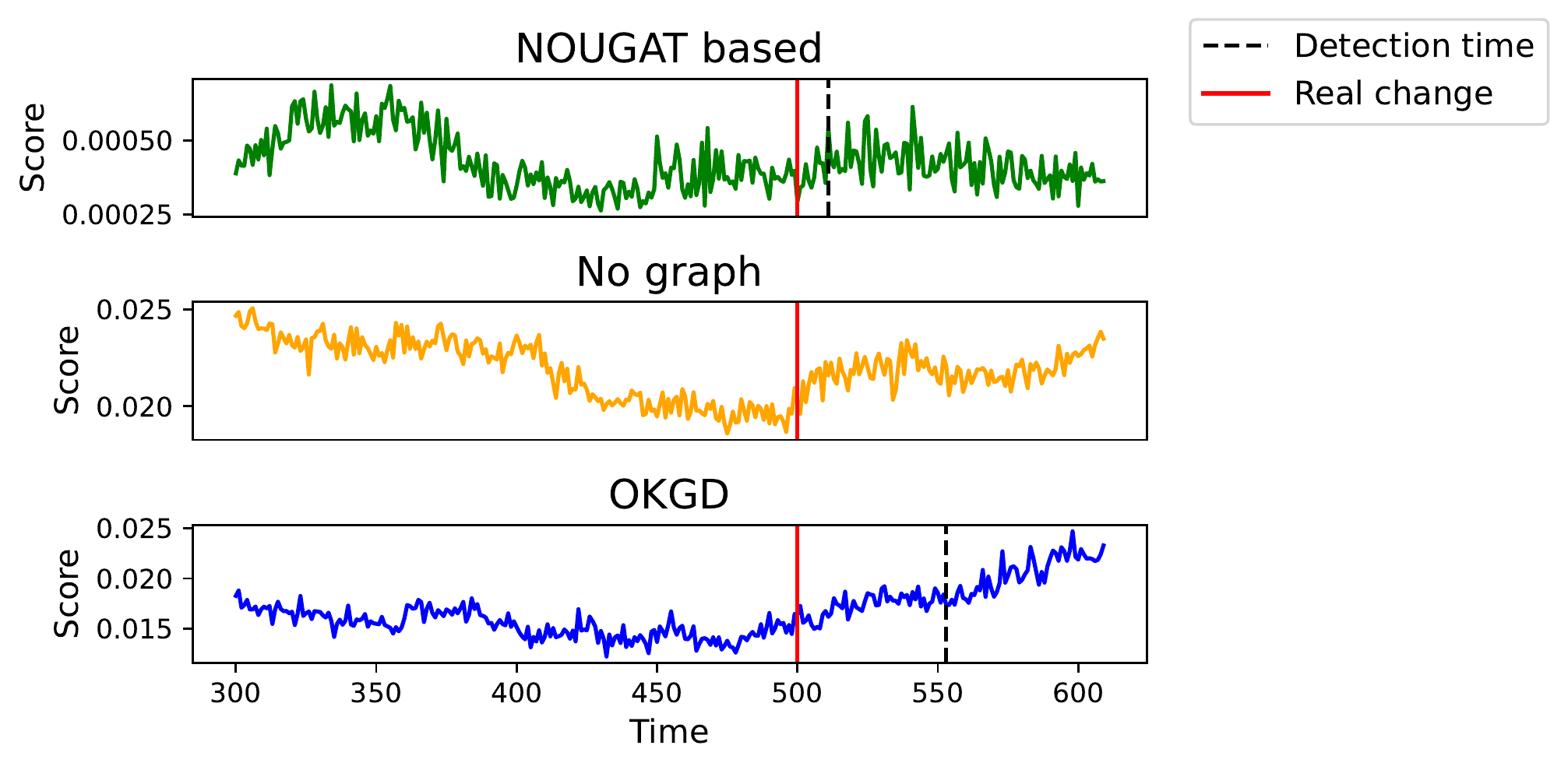}
}%0.538
\vspace{-2mm}
\caption{Comparison of the detection score of the algorithms: NOUGAT-based (green), \OKGD without graph structure (orange) and \OKGD (blue). Problem instances: a) the marginal density changes for $10$ nodes that are picked at random, %
b) the change affects the covariance matrices of all the $20$ nodes of the first cluster.}
\vspace{-2mm}
\label{fig:this_figure} 
\end{figure}

%\begin{figure}[H]{\label{fig:clusters}}
%\begin{minipage}[b]{\linewidth}
  %\centering
  %\centerline{
  %\includegraphics[width=\linewidth]{cluster.pdf}}
%\end{minipage}
%%\label{fig:CP_GS}
    %\caption{Comparison between the detection score related with the  algorithms: NOUGAT-based (\textbf{Green}), \OKGD without graph structure (\textbf{Orange}) and \OKGD. In this case all the nodes in the first cluster change their covariance matrix.}\label{fig:clusters}
%\end{figure}

\inlinetitle{Changes concentrated in a graph cluster}{.} %
In the second experiment, all the $20$ nodes of one cluster change their distribution. This problem was also the motivation of the NOUGAT-based algorithm. What we investigate here is whether our algorithm is able to also identify this kind of changes. \Fig{fig:clusters} shows a simulated instance: the affected community is the C$1$ cluster, which at time $500$ starts behaving like the C$3$ cluster (\ie the variables become dependent). As we can see, the methods that use the graph structure are able to spot the change. In this particular simulation instance, the NOUGAT detector is the fastest.

\Tab{tab:synthetic experiments} shows results after averaging over $100$ simulation instances. In each instance, we first select a cluster at random and then we change the probabilistic model of all of its nodes. %, as described at the beginning of the section. 
%We can see how, 
Interestingly, even in this particular case, \OKGD has an expected detection delay lower than the NOUGAT-based algorithm, a higher precision, and no false alarms.
 
%\begin{figure}[H]{\label{fig:clusters}}
%\begin{minipage}[b]{\linewidth}
  %\centering
  %\centerline{
  %\includegraphics[width=\linewidth]{cluster.pdf}}
%\end{minipage}
%\label{fig:CP_GS}
    %\caption{Comparison between the detection score related with the  algorithms: NOUGAT-based (\textbf{Green}), \OKGD without graph structure (\textbf{Orange}) and \OKGD. In this case all the nodes in the first cluster change their covariance matrix.}\label{fig:clusters}
%\end{figure}

\subsection{Use-cases on real data}

\inlinetitle{Seismic dataset}{.}
We show the performance of our method using the well-known dataset provided by the High Resolution Seismic Network. %, operated by the Berkeley Seismological Laboratory. 
The dataset contains the tremor signal captured by the sensors located at $13$ seismic stations near Parkfield, California. Each of the stations contains three geophones in three mutually perpendicular directions. The recording was captured on Dec 23, 2004 from 2:00am to 2:16am with an observation every $0.064$ seconds, making a total of around $15k$ observations. An earthquake measured at duration magnitude $1.47$Md hit %near Atascadero, California 
at 02:09:54.01. The goal of this exercise is to detect the moment the earthquake occurred. 
The dataset version we are working on is described in \cite{Chen2021} and is available online\footnote{The ocd package:
\url{https://cran.r-project.org/web/packages/ocd}.} . The processing of the data is detailed in the same paper; it comprises standard steps in the seismology literature and a autoregressive filter of order $1$.

We represent seismic stations as graph nodes. The graph is built via the $5$-nearest neighbor method. The signal generated at each node is the measurements of the three geophones, \ie $y_{v,t} \in \R^3$. Here we set $\Npre = \Npost = 100$, which corresponds to $6.4$ seconds. The threshold $e_t$ is fixed as $4$ times the empirical expectation of the score $\norm{\hat{g}_t}$.  The other parameters are tuned as described at the beginning of Sec.\,V. 

We can see how the graph structure reduces the detection delay in this case, as the results for \OKGD and NOUGAT-based show. Nevertheless, the number of false alarms is bigger for the NOUGAT-based. %
The epicenter of the earthquake occurred $50$km far away from the seismic stations and the wave travels at $6$km/s, so the detection delay of $12.07$ seconds is a sufficiently good result in this context. 

\begin{figure}[t]\label{fig:sysmic}
\vspace{-2mm}
\begin{minipage}[b]{\linewidth}
  \centering
  %\centerline{
  %\includegraphics[width=0.40\linewidth, viewport=0 0 547 700, clip]{}\hspace{2mm}
	%\includegraphics[width=0.36\linewidth, viewport=710 0 1200 700, clip]{sysmic.pdf}
%\hspace{2mm}	
	%\includegraphics[width=0.083\linewidth, viewport=1210 0 1350 700, clip]{sysmic.pdf}
	%}
  \includegraphics[width=\linewidth]{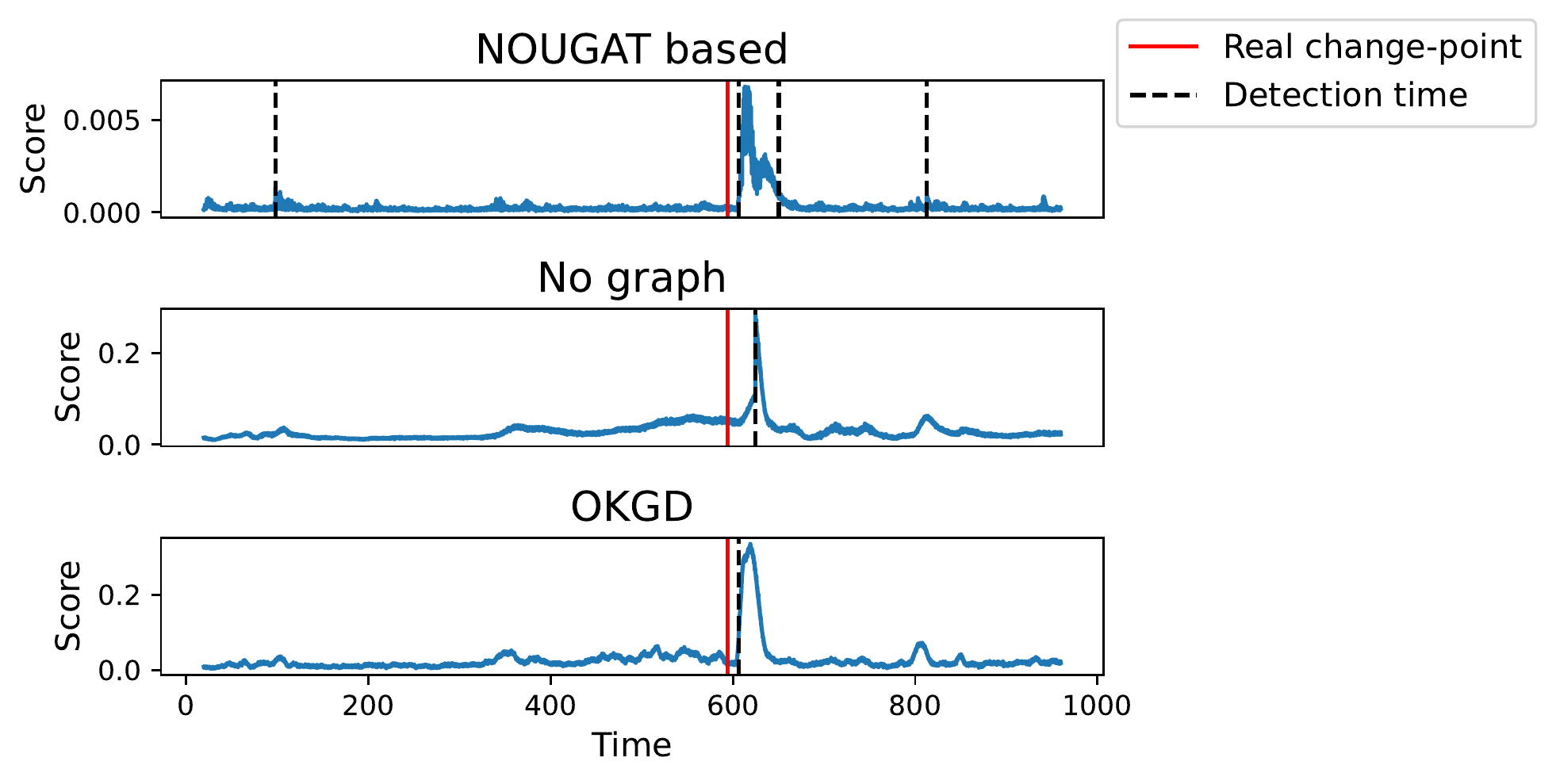}
\end{minipage}
\vspace{-7mm}
\caption{%(\textbf{Top}) Comparison of the vector $\hat{g}_t^2$ related to the \OKGD algorithm, $10$ seconds before the earthquake and at the change-point estimated by the algorithm. The colors of each node indicate the value taken by the graph signal at each station. The location of the earthquake is indicated at bottom left by a star. (\textbf{Bottom}) 
Seismic dataset: the norm of the score generated by the different change-point detection methods at each time $t$.}
\vspace{-5mm}
\label{fig:}
\end{figure}

\inlinetitle{Influenza epidemics dataset}{.}
The French \emph{Sentinelles Network} \cite{Sentinelles2021} follows epidemic outbreaks and collects timely epidemiological data. %is an information system developed by the French National Institute of Health and Medical Research and the Sorbonne University that allows the collection of real-time epidemiological data. %The generated detasets enable different national and international collaborations on epidemiological evaluations and research. One of the axis of interest is the detection of outbreaks of influenza-like diseases. 
%
%The system 
It provides a weekly update of the number of cases and incidence rates of various re-occurring diseases in the different regions of France. Furthermore, it uses a simple yet standard methodology %\NOTE{[ADD REF??]}
 to detect the starting point of an epidemic outbreak. %This information is provided in their website as well. 
The exercise is %to verify if our method is able to 
detect the beginning of such outbreaks.

This dataset %that we used 
includes the weekly incidence rates of influenza-like diseases per $100$k inhabitants. The %available 
time period goes from the $44$-th week of 1984 up to the $11$-th week of 2020 for the $21$ geographical regions of France. We build a graph containing nodes that represent the regions, and edges connecting regions that share border. The signal at each node is the recorded incidence rate of infection at one region. 

For the burning period, we use observations that have occurred before 1995, excluding the epidemic periods. We fix the threshold $e_t$ to be $1.75$ times the empirical expectation of the score $\norm{\hat{g}_t}$. We fix $\Npre=24$ that corresponds to a $6$-month period, and we try to test for outbreaks using upcoming observations for every two weeks $\Npost=2$. The other parameters are tuned as described at the beginning of Sec.\,V. %\Sec{sec:exps}. 

The results are reported in \Fig{fig:flu} and \Tab{tab:synthetic experiments}. Once again, we can see that \OKGD shows better performance compared to the other methods: higher precision, smaller detection delay, and no false alarms. In this use-case, ignoring the graph leads to very big errors (see \Tab{tab:synthetic experiments}). On the other hand, the NOUGAT-based method has a low detection delay, a high precision, but it has the problem of more false alarms. It is very interesting to note how the \OKGD method anticipates the peak of the national incidence rates in most outbreaks. In this case, we allow a window of $15$ weeks around the official beginning of each epidemic period, as in some cases the official date occurs after the peak of the epidemic period.
%#A: not very clear this last sentence

\section{Conclusions and further work}

In this paper, we presented the Online Kernel Graph Detector for spotting change-points in data streams observed over graphs%that effectively integrates the graph structure. 
. Our sound formalization of the problem uses a kernel-based cost function made of two terms: a penalized LSE-like term aiming to infer the likelihood-ratio at each of the nodes, and a Laplacian penalization term aiming to guarantee the smoothness of the estimations. This is a non-parametric approach flexible enough to handle heterogeneous data streams. We demonstrated the efficiency of the method in several synthetic and real scenarios. One of the hard points that remains to be analyzed is the choice of the  threshold parameter, which would require a broader theoretical framework able to evaluate also the significance of the graph structure in a specific detection problem. %As it can be seen in the experiments, such parameter is related to the number of false alarms and the delay to detect a change-point. This is a common problem when analyzing online detectors but a hard one. In fact, to the best of our knowledge there is a lack of theoretical work in this topic for detectors based in a kernel-based estimation of the likelihood-ratio. A close question is to quantify the impact of the graph structure in terms of the detection delay and false alarm rate. A theoretical framework both questions could help us to design better kernels and select between different graph structures aiming to improve the performance of online change-point detection in Graphs.

\begin{figure}[t]
\begin{minipage}[b]{\linewidth}
  \centering
  \centerline{
	\includegraphics[width=0.46\linewidth, viewport=0 9 565 440,clip]{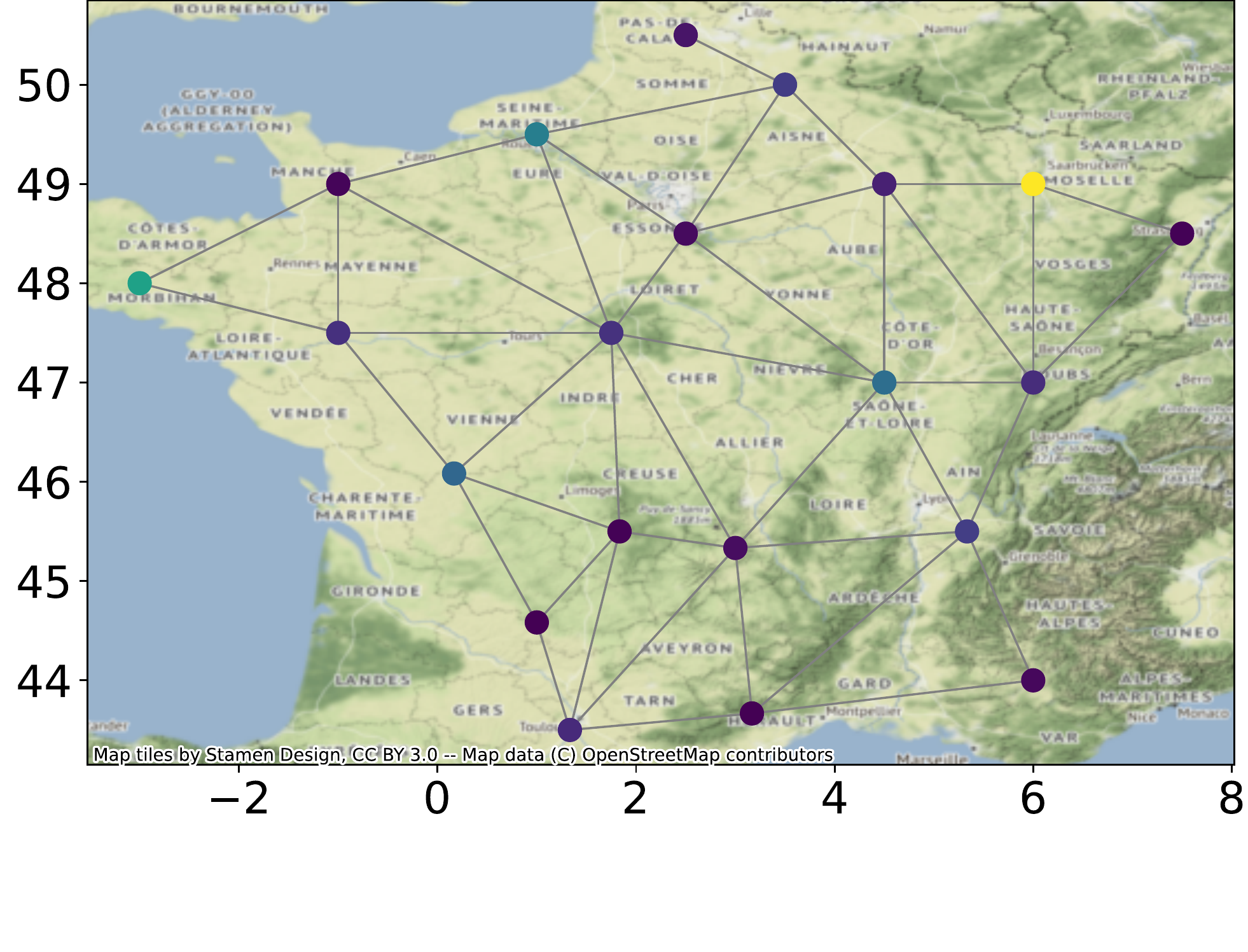}%
	\hspace{2mm}
	\includegraphics[width=0.437\linewidth, viewport=600 10 1135 470,clip]{images/flu_map.pdf}%
	\hspace{1mm}
	\includegraphics[width=0.053\linewidth, viewport=1300 -153 1425 550,clip]{images/flu_map.pdf}
	}
  \includegraphics[width=1.2\linewidth]{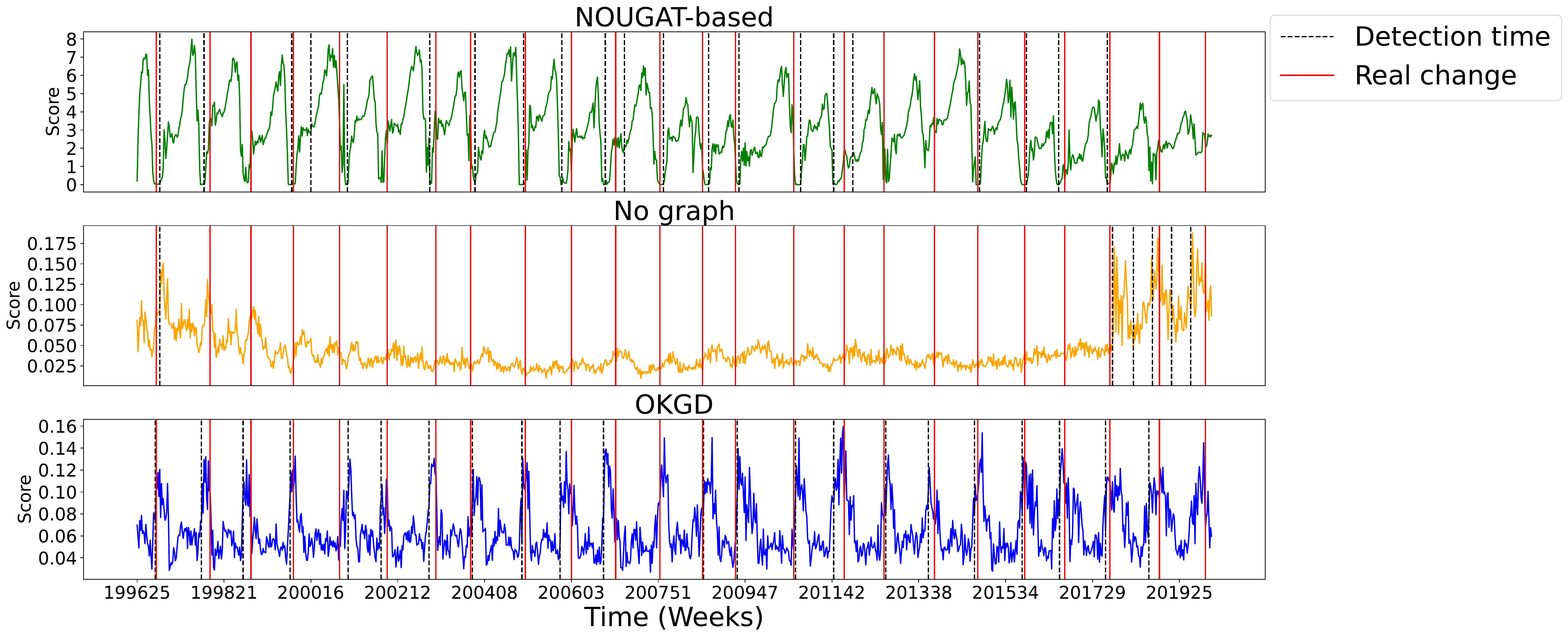}
   \includegraphics[width=1.2\linewidth]{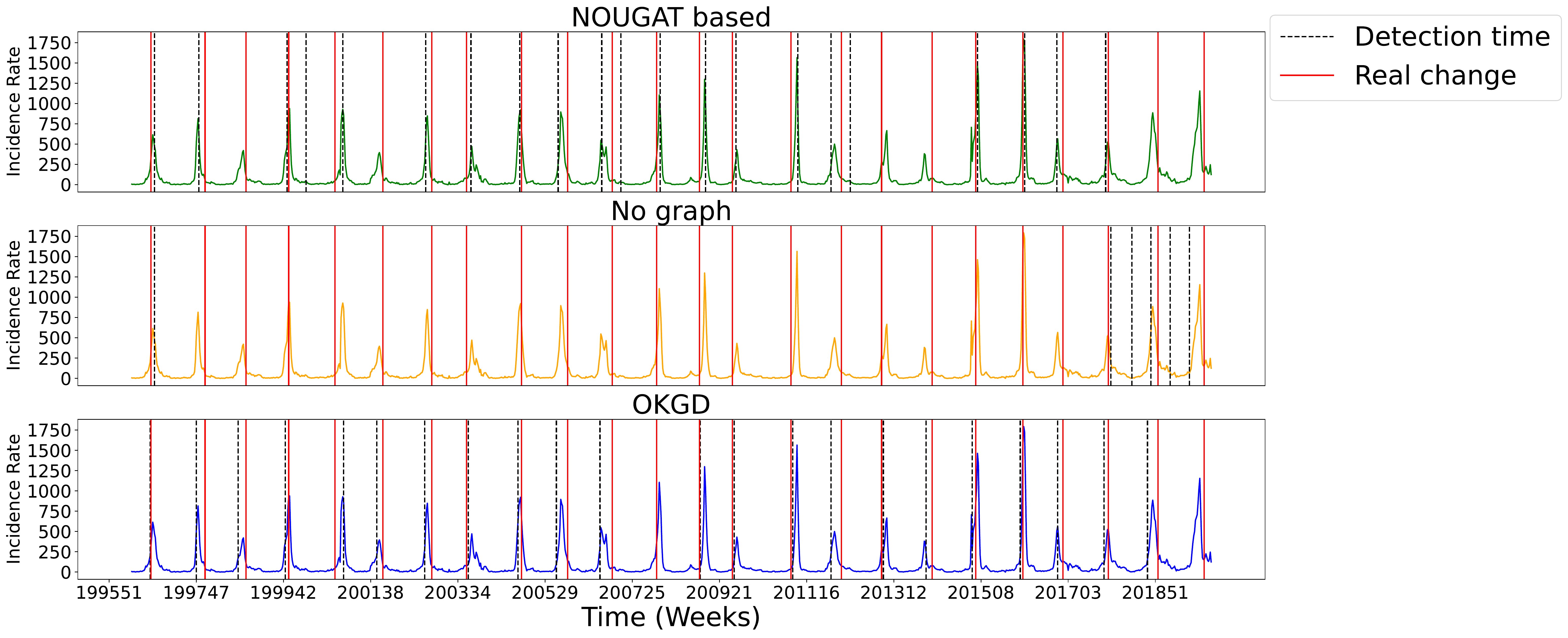}
\end{minipage}
\vspace{-6mm}
\caption{(\textbf{Top}) Comparison of the vector $\hat{g}^2$, $5$ weeks before an outbreak and at a change-point detected by \OKGD. The node color indicates the value of $\hat{g}^2$ at each region of France. (\textbf{Center}) Norm of the score generated by the different change-point detection methods at each time $t$. (\textbf{Bottom}) The national incidence rate per $100$k inhabitants, as reported by the Sentinels Network.}
\label{fig:flu}
\end{figure}
\begin{table}[h!]
\footnotesize
\centering
\makebox[\linewidth][c]{%
\scalebox{.883}{
\begin{tabular}{c r || r r r }
    \toprule
 \textbf{\ Dataset\,/} &  & \textbf{Detection}  %($\downarrow$) 
&  \textbf{\#\,False}  %($\downarrow$) 
&   %($\uparrow$)  
\\
\textbf{\ Scenario} & \textbf{\ Detector} & \textbf{\ delay (std)}  %($\downarrow$) 
&  \textbf{alarms}  %($\downarrow$) 
&  \textbf{Precision}  %($\uparrow$)  
\\
    \hline\hline
\emph{Synthetic}:  &   \ NOUGAT-based & 59.30 (25.70) &   3 \ \ & \ \  \ \ 36 \% \\
Random &     \ \OKGD-no graph & 86.78 (07.28)  &  0  \ \ & \ \ \ \ 60 \%  \\
locations &      \ \OKGD  & 54.79 (13.61) &  0  \ \ & \ \ \ \ 98 \%  \\
 \hline
  \emph{Synthetic}:&     \ NOUGAT-based & 49.56 (25.33) & 7  \ \ &  \ \  \ \ 57 \%   \\
One cluster &     \ \OKGD-no graph &  58.29 (06.47) & 0  \ \ &  \ \ \ \ 48 \% \\
&      \ \OKGD  & 34.29 (14.74)  &  0 \ \ &  \ \ \ \ 75 \%  \\
\hline  
  \emph{Real}: & \ NOUGAT-based & 12.39 seconds  & 3 \ \ & \ \ 100 \% \\
Seismic &     \ \OKGD-no graph & 30.57 seconds  &  0 \ \ & \ \ 100 \% \\ 
&   \ \OKGD  & 12.07 seconds &   0 \ \ & \ \ 100 \% \\  
\hline 
  \emph{Real}: & \ NOUGAT-based & \ \ 2.5 (03.00) weeks &   3 \ \ & \ \  
	%91.67 \% \\
	75 \% \\
Influenza &     \ \OKGD-no graph &   \ \ 2.33 (01.70) weeks &  3  \ \
&  \ \ 
%4.17 \%
13 \%  \\  &  \ \OKGD  & \ \ 0.79 (02.06) weeks & 0 \ \ & \ \ 100 \%   \\ 
\hline 
    \bottomrule
\end{tabular}
}
}
\vspace{-2mm}
\caption{Performance comparison between change-point detectors in different scenarios. Whenever there is more than one change-point, the standard deviation of the detection delay is shown in parentheses. The measure of time is specified when relevant.}\label{tab:synthetic experiments}
%\vspace{-2mm}
\end{table}

\newpage
\section{Acknowledgment}
This work was funded by the IdAML Chair hosted at ENS Paris-Saclay, University Paris-Saclay and the DIM Math Innov network.

{\balance
\bibliography{paper}
}

\ \newpage
\appendix

\section{Appendix}

\subsection{Derivation of the cost function of quadratic form}

Here we detail how \Eq{eq:final_problem} is derived in Sec.\,``Cost function''. %\ref{subsec:estimation}}. 
Using the expressions and terms provided in that section, we can reformulate our estimation \Problem{eq:empirical_problem} to:
\begin{equation}\label{eq:final_problem_appendix1}
\begin{aligned}
  &  \min_{\theta \in \R^L } F_t(\theta)  = \min_{\theta \in \R^{L} } \frac{\theta^\top H_t^{\prechange} \theta}{2} +\theta^\top h_t^{\prechange} - \theta^\top h_t^{\postchange} \\ & + \frac{\lambda}{2} \theta^\top \Big( \sum_{y_j \in\setpre} \frac{ K_G(y_j) \Lpc K_G(y_j)^\top}{\Npre} \Big) \theta + \gamma \frac{\theta^\top \id_L \theta}{2} \\
  &  = \min_{\theta \in \Theta } \sum_{v \in V} \left[ \frac{ \theta_v^\top H_{v,t}^{\prechange}\theta_v }{2}  + \theta_v^\top h_{v,t}^{\prechange} -  \theta_v^\top h_{v,t}^{\postchange}  + \frac{\gamma}{2}  \theta_v^\top \id_{L_v} \theta_v
\right] \\ & +  \frac{\lambda}{4} \sum_{y_j \in\setpre }\sum_{u,v \in V} \frac{   W_{uv}  \left(  \theta_v^\top k_v(y_{v,j}) - \theta_u^\top k_u(y_{u,j})  \right)  ^2 }{\Npre}.
\end{aligned}
\end{equation}
And now it is easy to see that $F_t(\theta)$ has the quadratic form:
\begin{equation}\label{eq:final_problem_appendix2}
    \min_{\theta \in \R^L } F_t(\theta)  = \min_{\theta \in \R^{L} } \frac{ \theta^\top  A_t \theta}{2} + \theta^\top b_t,
\end{equation}
%
%where 
%
\begin{align}
    \!\!\!\!\text{where}\ \  A_t &= H_t^{\prechange}+  \lambda  \sum_{y_j \in\setpre } \frac{ K_G(y_j) \Lpc K_G(y_j)^\top}{\Npre} + \gamma \id, \\
    b_t &= h_{t}^{\prechange} - h_{t}^{\postchange}.
\end{align}

\subsection{Proof of Theorem \ref{Th:main_result}}

\begin{proof}
Notice we can write $\ExpecNA{A_t}= C_t+ \lambda \id $, where $C_t$ is a positive definite positive matrix thanks to the kernel and Laplacian positive definiteness. This implies $\ExpecNA{A_t}$ is positive definite, then invertible. This means \Problem{eq:expected_cost} admits a unique solution given by: 
\begin{equation}
    \theta^*=\ExpecNA{A_t}^{-1} \ExpecNA{b_t}.
\end{equation}
Notice: when there is no change-point $\ExpecNA{b_t}^{-1}=\ExpecN{h^{\prechange}_{v,t}}-\ExpecN{h^{\postchange}_{v,t}}=0$, and hence $\theta^*=0$
\end{proof}

For completeness of the proof we include the results appearing in the BSGD's paper  \cite{Xu2015}. In this work the objective function to optimize takes the form:
\begin{equation}\label{eq:XU_opt_problem}
\begin{aligned}
    \min_{x \in \chi_v} \Phi(x) &=  \min_{x \in \chi_v}  \Expec{f(x,\epsilon)} + \sum_{v=1}^{N} r(x_v) \\
     &=  \min_{x \in \chi_v}  \textbf{F}(x)+ \sum_{v=1}^{N} r(x_v) \\
     &  \ \ \  \text{s.t}   \ \ \  x_v \in \chi_v  \ \ \ v=1,...,N,
\end{aligned}
\end{equation}
where $\chi_v \subset \R^{n_v}$, the variable $x$ is of dimension $L=\sum_{v=1}^s L_v$ and is partitioned into disjoint blocks $x=(x_1,x_2,...,x_N)$, $\epsilon$ is a random variable. 

The convergence of the $t$-th update described in \Expr{eq:update} is guaranteed under the following hypothesis:

\begin{assumption_2}\label{ass_a:boundness}
There exist a constant $A$ and a sequence $\{\sigma_k\}$ such that, for any $v$ and $t$,
\begin{equation}
\begin{aligned}
\norm{\Expec{\delta_{v,t}|\Xi_{t-1}} }\leq A \max_{v} \alpha_{v,t} \\
\Expec{\norm{\delta_{v,t}}}^2 \leq \sigma_t^2,
\end{aligned}  
\end{equation}
where $\Xi_{t-1}$ refers to the set of variables sampled at the interaction $t-1$.
\end{assumption_2}
%
%where $\Xi_{k-1}$ refers to the set of variables sampled at the interaction $k-1$

\begin{assumption_2}\label{ass_a:Lipschitz}
The objective function is lower bounded, \ie $\Phi(x)>-\infty$ . There exists a uniform Lipschitz constant $L>0$ such that:
\begin{equation}
\norm{\Delta_{x_v}F(x) - \Delta_{x_v}F(y)} \leq L \norm{x-y}, \ \ \  \forall x,y, \forall i.
\end{equation}
\end{assumption_2}

\begin{assumption_2}\label{ass_a:boundness_x}
There exists a constant $\rho$ such that $\Expec{x_t}^2 \leq \rho^2 \forall t$.
\end{assumption_2}

\begin{assumption_2}\label{ass_a:Lipschitz_r}
Every function $r_i$ is Lipschitz continuous, namely there is a constant $L_{r_v}$, such that $\norm{r_v(x_v)-r_v(y_v)} \leq C_{r_v} \norm{x_v-y_v}, \forall x_v,y_v$.
\end{assumption_2}
They denote by $L_{\text{max}}=\max_v C_{r_v}$ the dominant Lipschitz constant.

One of the main results of \cite{Xu2015} is the convergence of %the BSGD method
BSGD for a set of cost functions $\phi(x)$:

\begin{theorem_2}\label{Th:xu2015}
 \cite{Xu2015} Let $\{x_t\}$ be generated from the BSGD algorithm with $\alpha_{v,t}=\alpha_{t}=\frac{c}{t}$, $\forall v,t$. Under Assumptions, if $F$ and $r_v$ are all convex, $\Psi$ is strongly convex with modulus $\mu>0$, and $\sigma= \sup_t{\sigma_t} < \infty$, then: 
\begin{equation}
\!\!\!\!\Expec{\norm{x_t-x^*}^2} \leq \frac{1}{t} \max\left\{\frac{2 D c(1+\mu c)}{\mu},\norm{x_1-x^*}\right\},\!\!
\end{equation}
where $x^*$ is solution to the optimization \Problem{eq:XU_opt_problem} and $D$ is defined as: 
\begin{equation}
\begin{aligned}
   \!\! D &= \frac{N (\sigma^2+4L_{\max})}{1-Lc} \\
   & +\sigma(\norm{x^*}+\rho)\left(\! A+ L \sqrt{\sum_{v=1}^N(4M_{\rho}^2+4\sigma^2_t+2L_{r_v}^2)} \right)\!\!\!\!\!
\end{aligned}
\end{equation} 
%
%\noindent and 
%
\begin{equation}
   \!\text{and}\ \ \  M_{\rho}=\sqrt{4L^2\rho^2 +2 \max_{v} \norm{\Delta_{x_v}F(0)}^2 }.
\end{equation} 
 
\end{theorem_2}

\subsection{Proof of Theorem
\ref{Th:convergence}}

In this section, we will prove the assumptions described are satisfied under the hypothesis $p(y) \neq p'(y)$. This will guarantee the convergence of our method as stated in  \Theorem{Th:xu2015}. To improve the clarity of the demonstration, we define the terms %: 
%
%\begin{equation}
%\!\!\!\!\!\!\!\!\!\!\Hilbert_v=\ExpecN{H_{v,t}} \ \ \text{and} \ \  \Hilbert_{vu}=\ExpecN{k_v(y_{v,t})k_u(y_{u,t})^\top},\!\!\!
%\end{equation}
$\Hilbert_v=\ExpecN{H_{v,t}}$ and $\Hilbert_{vu}=\ExpecN{k_v(y_{v,t})k_u(y_{u,t})^\top}$, 
where $\lambda^{\text{max}}_v$ is the maximum eigenvalue of $ \Hilbert_v$.

On the other hand, when $A$ is a matrix, we will denote by $\norm{A}_2$ the spectral norm of $A$. 

According to the proof of \Theorem{Th:main_result},
the optimization, when there is not change-point, \Problem{eq:expected_cost} has a unique solution  $\theta^*=0$ and $\mathbb{E}_{p(y)}[b_t]=0$. This means that we can reduce the optimization domain to a ball of radius $\rho>0$ ($\mathbb{B}_{\rho}$) and rewrite \Problem{eq:expected_cost} as:
\begin{equation}
\begin{aligned}
\min_{\theta \in \R^{L}} \textbf{F}(\theta) &= \min_{\theta \in \mathbb{B}_{\rho} }  \theta^\top  \mathbb{E}_{p(y)}  \Bigg[ \frac{H_t^{\prechange}}{2} +  \frac{\gamma}{2} \id_L \\
& +\frac{\lambda}{2}  \Big(\!\!\sum_{j \in Y^{\prechange}}  \frac{K_G(y_j) \Lpc K_G(y_j)^\top}{\Npre}\Big)  \Bigg]  \theta. \\
\end{aligned}
\end{equation}
This cost function is always strongly convex with modulus  $\frac{\gamma}{2}$, and it is lower-bounded by $0$. 

Additionally, it is easy to see that the function $\dtheta \mathbf{F}_t(x)$ is a linear function in $x$, then Lipschitz continuous. Furthermore we can show that: 
\begin{equation}
\begin{aligned}
& \norm{\dtheta \mathbf{F}(x)-\dtheta \mathbf{F}(y)} \leq \norm{((1+\lambda d_v)\Hilbert_v+\gamma I)(x_v-y_v)} \\ 
& +  \lambda \sum_{u \in \nghood(v)}
W_{uv} \norm{\Hilbert_{vu}(x_u-y_u)}
\\
& \leq \left[ (1+\lambda d_v)  \lambda^{\text{max}}_v+\gamma \right]
\norm{x_v-y_v} \\
& + \lambda M_v \sum_{u \in \nghood(v)} W_{uv}  M_u \norm{x_u-y_u}  \\
& \leq \left[ (1+\lambda d_v)  \lambda^{\text{max}}_v+\gamma
+ \lambda M_v \sum_{u \in \nghood(v)} W_{uv}  M_u \right] \norm{x-y} \\
& = C_v \norm{x-y} \\
&  \leq L \norm{x-y}.
\end{aligned}
\end{equation}
%
%where 
The second inequality is true thanks to \Assumption{assum:kernel} and 
\begin{equation}{\label{eq:C_v}}
\begin{aligned}
C_v &=   (1+\lambda d_v)  \lambda^{\text{max}}_v+\gamma
+ \lambda M_v \sum_{u \in \nghood(v)} W_{uv}  M_u  \\
L &= \max_{v \in V } C_v.
\end{aligned}
\end{equation}
The previous discussion proves that the conditions of Assumption A.\ref{ass_a:Lipschitz} are satisfied.

Regarding Assumption A.\ref{ass_a:boundness_x}, the equivalent formulation of the optimization problem in the set $\mathbb{B}_{\rho}$ allow us replace our gradient-step for a projected gradient-step in $\mathbb{B}_{\rho}$. In other words, we can always guarantee that: 
\begin{equation}\label{eq:theta_bound}
    \norm{\theta_t} < \rho,
\end{equation}
which is a more restrictive condition that Assumption A.\ref{ass_a:boundness_x}.

Now lets validate that Assumption 3 is satisfied. Notice that the terms $h_{v,t}^{\postchange},h_{v,t}^{\prechange},H_{v,t}^{\prechange}$ are independent from $\setprebef$ given the independence assumption in the time-series and the fact that $Y_{t-1}^{\prechange}$ is sampled uniformly at random $Y^{\prechange}$ at every time stamp. This reduces the complexity of some terms: 
\begin{equation}
\begin{aligned}
&\ExpecN{\Delta_{\theta_{v}} F_t((\theta_{t+1})_{<v},(\theta_{t})_{ \geq v})|\setprebef} = \\
& \ExpecN{ h_{v,t}^{\postchange}-h_{v,t}^{\prechange}|\setprebef} + (1+\lambda d_v) \ExpecN{H_{v,t}^{\prechange} \theta_{v,t}|\setprebef}
\\ &+\gamma \ExpecN{\theta_{v,t}|\setprebef}  - \lambda \sum_{u \in \nghood(v)} W_{vu} \frac{1}{\Npre} * 
\\
& \sum_{y_j \in\setpre}
\Big[ \ExpecN{k_v(y_{v,j})k_u(y_{u,j})^\top \theta_{u,t+1} \one_{u<v} |\setprebef} 
\\& + \ExpecN{k_v(y_{v,j})k_u(y_{u,j})^\top \theta_{u,t} \one_{u \geq v} |\setprebef} \Big] 
\\
  &=(1+\lambda d_v) \ExpecN{H_{v,t}^{\prechange}} \ExpecN{ \theta_{v,t}|\setprebef} +\gamma \ExpecN{\theta_{v,t}|\setprebef} \\ & - \lambda \sum_{u \in \nghood(v)} W_{vu} \Big[
   \ExpecN{k_v(y_{v,t})k_u(y_{u,t})^\top} \ExpecN{\theta_{u,t} \one_{u \geq v} |\setprebef} \\
  & + \frac{1}{\Npre}  \sum_{y_j \in\setpre} \ExpecN{k_v(y_{v,j})k_u(y_{u,j})^\top \theta_{u,t+1} \one_{u < v} |\setprebef}\Big]
 \\
  &=(1+\lambda d_v) \Hilbert_v \ExpecN{ \theta_{v,t}|\setprebef} +\gamma \ExpecN{\theta_{v,t}|\setprebef}  \\ & - \lambda \sum_{u \in \nghood(v)} W_{vu} \Big[
  \Hilbert_{vu} \ExpecN{\theta_{u,t}   \one_{u \geq v} |\setprebef} \\
  & + \frac{1}{\Npre}  \sum_{y_j \in\setpre} \ExpecN{k_v(y_{v,j})k_u(y_{u,j})^\top \theta_{u,t+1} \one_{u < v} |\setprebef}\Big],
\end{aligned}
\end{equation}
where the second equality follows from the independence hypothesis. On the other hand: 
\begin{equation}
\begin{aligned}
& \ExpecN{\Delta_{\theta_{v}}\textbf{F}((\theta_{t+1})_{<v},(\theta_{t-1})_{ \geq v})|\setprebef}=  \\
&  (1+\lambda d_v) \Hilbert_v \ExpecN{ \theta_{v,t}|\setprebef} +\gamma \ExpecN{\theta_{v,t}|\setprebef}  \\ & - \lambda \sum_{u \in \nghood(v)} W_{vu} \Hilbert_{vu} \Big[
 \ExpecN{\theta_{u,t}   \one_{u \geq v} |\setprebef} 
 \\ &   +  \ExpecN{ \theta_{u,t+1} \one_{u < v} |\setprebef}\Big].
\end{aligned}
\end{equation}
Then, it is easy to obtain the upper-bounds for the quantity:
$$\delta_{v,t}=\Delta_{\theta_{v}} F_t((\theta_{t+1})_{<v},(\theta_{t})_{ \geq v})-\Delta_{\theta_{v}}\textbf{F}((\theta_{t+1})_{<v},(\theta_{t-1})_{ \geq v})$$
\begin{equation}
\begin{aligned}
& \norm{\ExpecN{\delta_{v,t}|\setprebef}}=  \lambda \Bigg\lVert \sum_{u \in \nghood(v)} W_{vu}  \frac{1}{\Npre}* \\
&\sum_{y_j \in\setpre} \Big[ \ExpecN{k_v(y_{v,j})k_u(y_{u,j})^\top \theta_{u,t+1} \one_{u < v} |\setprebef}
\\ & - \Hilbert_{vu} 
 \ExpecN{ \theta_{u,t+1} \one_{u < v} |\setprebef}\Big] \Bigg\rVert \\
&= \lambda   \Bigg\lVert  \sum_{u \in \nghood(v)} W_{vu}  \frac{1}{\Npre}* \\
&\sum_{y_j \in\setpre} \ExpecN{\big(k_v(y_{v,j})k_u(y_{u,j})^\top-\Hilbert_{vu}\big) \theta_{u,t} \one_{u < v} |\setprebef} \\ 
&- \alpha_{u,t}  \ExpecN{\big(k_v(y_{v,j})k_u(y_{u,j})^\top-\Hilbert_{vu}\big)* \\
& \Delta_{\theta_{v}} F_t((\theta_{t+1})_{<u},(\theta_{t})_{ \geq u})) | \setprebef}  \Bigg\rVert \ \  \explanation{Update rule for $(\theta_{t+1})_{<u}$}
\\ &=   \lambda  \Bigg\lVert  \sum_{u \in \nghood(v)} W_{vu} \alpha_{u,t}  \frac{1}{\Npre}*   \\
&  \sum_{y_j \in\setpre} \ExpecN{\big(k_v(y_{v,j})k_u(y_{u,j})^\top-\Hilbert_{vu}\big)* \\ & \Delta_{\theta_{v}} F_t((\theta_{t+1})_{<u},(\theta_{t})_{ \geq u}) |\setprebef} \Bigg\lVert  \ \ \ \explanation{\Assumption{assum:independence}} \\
&=  2 \lambda  M_v \sum_{u \in \nghood(v)} W_{vu} \alpha_{u,t} M_u * \\  & \left\lVert \ExpecN{\Delta_{\theta_{u}} F_t((\theta_{t+1})_{<u},(\theta_{t})_{ \geq u}) |\setprebef)} \right\rVert \explanation{\Assumption{assum:kernel}} \\
 &  \leq 2 M_v \lambda \sum_{u \in \nghood(v)} W_{uv}  M_u \alpha_{u,t} \Big[ (1+\lambda d_v) \norm{
  \Hilbert_v \ExpecN{ \theta_{v,t}|\setprebef}} \\
 &  +\gamma \norm{\ExpecN{\theta_{v,t}|\setprebef}} \\
 & + \lambda \sum_{u \in \nghood(v)} W_{vu} \Bigg[
\norm{  \Hilbert_{vu} \ExpecN{\theta_{u,t}   \one_{u \geq v} |\setprebef} } \\
  & + \frac{1}{\Npre} \Big\Vert\!\!\sum_{y_j \in\setpre} \ExpecN{k_v(y_{v,j})k_u(y_{u,j})^\top \theta_{u,t+1} \one_{u < v} |\setprebef}\Big\Vert\Bigg]
    \\ &  \leq 2 \lambda M_v  \sum_{u \in \nghood(v)} W_{uv}  M_u \alpha_{u,t} C_u \rho  \ \ \ \explanation{\Expr{eq:theta_bound} and \Expr{eq:C_v}}
     \\ &  \leq  2 \lambda   \rho  M_v  \Big(\! \sum_{u \in \nghood(v)} W_{uv}  M_u C_u \!\!\Big)\max_{j} \alpha_{j,t}
     \\ & \leq  A  \max_{j} \alpha_{j,t},
\end{aligned}
\end{equation}
where $A=\max_{v \in V} 2 \lambda   \rho  M_v  \left( \sum_{u \in \nghood(v)} W_{uv}  M_u C_u \right)$.

On the other side we have that: 
\begin{equation}
\begin{aligned}
&  \ExpecN{ \norm{\delta_{v,t}}^2}  
\leq \mathbb{E}_{p(y)} \Bigg[  \Big\lVert (1+\lambda d_v) \left( H_{v,t}-\Hilbert_v\right)\theta_{v,t} \\
& + \lambda \sum_{u \in \nghood(v)} W_{uv} \frac{1}{\Npre} \sum_{y_j \in\setpre} \big(k_v(y_{v,j})k_u(y_{u,j})^\top-\Hilbert_{vu} \big)* \\
& \big(\theta_{u,t}   \one_{u \geq v}+\theta_{u,t+1} \one_{u < v} \big) +  h_{v,t}^{\postchange}-h_{v,t}^{\prechange} \Big\rVert^2
\Bigg] \\
 & \leq 4(1+\lambda d_v)^2  \ExpecN{\norm{H_{v,t}-\Hilbert_v}^2 \norm{\theta_{v,t}}^2} \\ 
& + 4 \lambda^2 \mathbb{E}_{p(y)} \Bigg[\Big\lVert \sum_{u \in \nghood(v)} W_{uv} \frac{1}{\Npre} \sum_{y_j \in\setpre} \big(k_v(y_{v,j})k_u(y_{u,j})^\top-\Hilbert_{vu} \big)* \\
 & \big(\theta_{u,t}   \one_{u \geq v}+\theta_{u,t+1} \one_{u < v} \big)\Big\rVert^2 \Bigg] + 2 \ExpecN{\norm{h_{v,t}^{\postchange}-h_{v,t}^{\prechange}}^2} \\
& \explanation{By applying twice the expression $ab \leq a^2 +b^2$} \\
&  \leq 4(1+\lambda d_v)^2 \rho^2 \ExpecN{\norm{H_{v,t}-\Hilbert_v}^2_2} + 2 \ExpecN{\norm{ h_{v,t}^{\postchange}-h_{v,t}^{\prechange}}^2} 
 \\ & + 4 \lambda^2 \rho^2 \mathbb{E}_{p(y)} \Bigg[\Bigg( \sum_{u \in \nghood(v)} W_{uv} \Bigg\lVert \frac{1}{\Npre}* \\
& \sum_{y_j \in\setpre} \big(k_v(y_{v,j})k_u(y_{u,j})^\top-\Hilbert_{vu}\big) \Bigg\rVert_2 \Bigg)^2 \Bigg] \ \ \ \explanation{\Expr{eq:theta_bound}} \\
& = \sigma_{v,t}
\end{aligned}
\end{equation}
Given Assumption \ref{assum:kernel}, we can define $\sigma_{t}=\max_{v \in V} \sigma_{v,t} $ and $\sigma_{t}<\infty$. Then the upper-bounds appearing in \Assumption{ass_a:boundness} are satisfied. 

Lastly, we can see how in our case the function $r_i(x)=0$ and $f(x,\epsilon)=F_t(x)$. This implies that: 
\begin{equation}
   M_{\rho}=2L\rho, 
\end{equation} 
which leads to: 
\begin{equation}
  D=\frac{N \sigma^2}{1- L c} + \sqrt{N \rho^2}
\Big( A+ L \sqrt{ N(4 L^2 \rho^2  +4\sigma^2_t)} \Big).
\end{equation} 

As we check the proof of Theorem A.\ref{Th:xu2015} we arrive to an intermediate step that appears as Eq.\,30 in \cite{Xu2015}, and leads to the upperbound: 
\begin{equation*}
    \ExpecN{\norm{\theta_{t+1}}^2} \leq  \frac{1}{t} \max\{\frac{4D c(1+0.5 c \gamma)}{\gamma} ),\norm{\theta_{1}}^2\}.
\end{equation*}

Finally, the last inequality shown in the formulation of \Theorem{Th:convergence} is a rewriting of the inequality appearing in Theorem A.\ref{Th:xu2015} in our context.

\newpage

\subsection{Proof of Corollary \ref{cor:convergence}}

We can deduce that under the null hypothesis: 
\begin{equation}
\begin{aligned}
\ExpecN{\norm{\hat{g}_t}^2} & \leq \ExpecN{\theta_t K_G(y_j)K_G(y_j)^\top \theta_t^\top} \\
& \leq \left(\!\sum_{v \in V } L_v M_v^2 \!\right) \ExpecN{\norm{\theta_t}^2} \ \ \ \explanation{\Assumption{assum:kernel}}\\
 & \leq \frac{1}{t} \left(\! \sum_{v \in V } L_v M_v^2 \!\right)  \max\left\{\!\frac{4D c(1+0.5 c \gamma)}{\gamma},\norm{\theta_{1}}^2\!\right\}. 
\end{aligned}
\end{equation}

The last expression is a consequence of \Theorem{Th:convergence}, the we can conclude that $\ExpecN{\norm{\hat{g}_t}^2} \rightarrow 0$ as $t \rightarrow \infty$, meaning our estimator is an asymptotically unbiased under the null hypothesis.

\subsection{Details for the synthetic data}

The synthetic data we used in our experiments are by construction heterogeneous, since the time-series of the nodes in each cluster are generated by different probabilistic models. More specifically: 
\begin{itemize}
    %\item Cluster $1$: 
    \item C$1$: 
		bivariate a Gaussian distribution with mean vector equal to the vector of zeros and with covariance matrix equal to the identity matrix. 
    %\item Cluster $2$: 
		\item C$2$: a Poisson distribution with expected value fixed at $5$.
    %\item Cluster $3$:
		\item C$3$:
		a bivariate Gaussian distribution with mean vector equal to the vector of zeros, variance values fixed at $1$, and the covariance %between dimensions 
		fixed at $0.75$.
    %\item Cluster $4$: 
		\item C$4$: a Poisson distribution with expected value fixed at $10$.
\end{itemize}

\end{document}